\pgfplotsset{compat=1.9} 
\newcommand{\E}[1]{\mathbb{E}\left(#1\right)}
\newcommand{\inteval}[3]{\left[#1\right]^{#3}_{#2}}
\newtheorem{theorem}{Theorem}
\newtheorem{definition}{Definition}
\newtheorem{proof}{Proof}
\newtheorem{lemma}{Lemma}
\pgfplotsset{compat=newest,
    width=6.5cm,
    height=5cm,
    scale only axis=true,
    max space between ticks=25pt,
    try min ticks=5
}
\tikzset{
    semithick/.style={line width=0.8pt},
}
\title{Expected Scalarised Returns Dominance: A New Solution Concept for Multi-Objective Decision Making \thanks{An earlier version of this work was presented at the Adaptive and Learning Agents Workshop 2021 \cite{hayes2021esr_set}. This article extends our workshop paper with additional theoretical analysis and new empirical results.}}
\author[1, **]{\textbf{Conor F. Hayes}}
\author[2]{\textbf{Timothy Verstraeten}}
\author[2, 3]{\textbf{Diederik M. Roijers}}
\author[1]{\textbf{Enda Howley}}
\author[1]{\textbf{Patrick Mannion}}
\affil[1]{National University of Ireland Galway, Ireland}
\affil[2]{Vrije Universiteit Brussel, Belgium}
\affil[3]{HU University of Applied Science Utrecht, The Netherlands}
\affil[**]{c.hayes13@nuigalway.ie}
\begin{document}

\maketitle

\begin{abstract}
In many real-world scenarios, the utility of a user is derived from a single execution of a policy. In this case, to apply multi-objective reinforcement learning, the expected utility of the returns must be optimised. Various scenarios exist where a user's preferences over objectives (also known as the utility function) are unknown or difficult to specify. In such scenarios, a set of optimal policies must be learned. However, settings where the expected utility must be maximised have been largely overlooked by the multi-objective reinforcement learning community and, as a consequence, a set of optimal solutions has yet to be defined. In this work we propose first-order stochastic dominance as a criterion to build solution sets to maximise expected utility. We also define a new dominance criterion, known as expected scalarised returns (ESR) dominance, that extends first-order stochastic dominance to allow a set of optimal policies to be learned in practice. Additionally, we define a new solution concept called the ESR set, which is a set of policies that are ESR dominant. Finally, we present a new multi-objective tabular distributional reinforcement learning (MOTDRL) algorithm to learn the ESR set in multi-objective multi-armed bandit settings.  
\end{abstract}
\keywords{Multi-objective \and Decision making \and Distributional \and Reinforcement learning \and Stochastic dominance}
\section{Introduction}
When making decisions in the real world, decision makers must make trade-offs between multiple, often conflicting, objectives \cite{vamplew2021scalar}. In many real-world settings, a policy is only executed once. For example, consider a municipality that receives the majority of its electricity from local solar farms. To deal with the intermittency of the solar farms, the municipality wants to build a new electricity generation facility. The municipality are considering two choices: building a natural gas facility or adding a lithium-ion battery storage facility to the solar farms. Moreover, the municipality want to minimise $CO_{2}$ emissions while ensuring energy demand can continuously be met. Given a new energy generation facility will only be constructed once, a full distribution over each potential outcome for capacity to meet electricity demand and $CO_{2}$ emissions must be considered to make an optimal decision. The current state-of-the-art multi-objective reinforcement learning (MORL) literature focuses almost exclusively on learning polices that are optimal over multiple executions. Given such problems are salient, to fully utilise MORL in the real world, we must develop algorithms to compute a policy, or set of policies, that are optimal given the single-execution nature of the problem.

In multi-objective reinforcement learning (MORL) a user's preferences over objectives are represented by a utility function. In certain scenarios a user's preferences over objectives may be unknown; therefore, the utility function is unknown. In this case, a user is said to be in the unknown utility function or unknown weights scenario \cite{roijers2013survey}. The unknown utility function scenario has three phases: the learning phase, the selection phase and the execution phase. During the learning phase a multi-objective method \cite{vamplew2011evaluation_methods} is used compute a set of optimal policies and the set of policies is returned to the user. During the selection phase the utility function of the user becomes known and a policy from the computed set is selected which best reflects their preferences. The selected policy is then executed during the execution phase \cite{hayes2021practical}.

In contrast to single-objective reinforcement learning (RL), multiple optimality criteria exist for MORL \cite{roijers2013survey}. In scenarios where the utility of the user is derived from multiple executions of a policy, the scalarised expected returns (SER) must be optimised. However, in scenarios where the utility of a user is derived from a single execution of a policy, the expected utility of the returns (or expected scalarised returns, ESR) must be optimised. The majority of MORL research focuses on the SER criterion and linear utility functions \cite{radulescu2020survey}, which limits the applicability of MORL to real-world problems. In the real world, a user's utility function may be derived in a linear or non-linear manner. For known linear utility functions, single-objective methods can be used to learn an optimal policy \cite{roijers2013survey}. Non-linear utility functions do not distribute across the sums of the immediate and future returns, which invalidates the Bellman equation \cite{roijers2018multi}. Therefore, to learn optimal policies for non-linear utility functions, strictly multi-objective methods must be used.

For non-linear utility functions, the policies learned under the SER criterion and the ESR criterion can be different \cite{radulescu2020survey,radulescu2020utility}. The ESR criterion has received very little attention, to date, from the MORL community with some exceptions \cite{malerba2021esr,vamplew2021stocashtic,reymond21esr,roijers2018multi}. To learn optimal policies in many real-world scenarios where a policy will be executed only once, the ESR criterion must be optimised. For example, in a medical setting where a user has one opportunity to select a treatment, a user will aim to maximise the expected utility of a single outcome. However, choosing the wrong optimisation criterion (SER) for such a scenario could potentially lead to a different policy than that which would be learned under ESR. In the real world, like in the aforementioned scenario, learning a sub-optimal policy could have catastrophic outcomes. 

Therefore, it is crucial that the MORL community focuses on developing multi-objective algorithms that can learn optimal policies under the ESR criterion. Recently, a number of multi-objective methods have been implemented that can learn a single optimal policy under the ESR criterion \cite{roijers2018multi,hayes2021dmcts}. 
However, in the current MORL literature, no multi-policy algorithms exist for the ESR criterion. In fact, a set of optimal policies for the ESR criterion has yet to be defined.

Due to the lack of existing research for the ESR criterion, a formal definition of the requirements to learn optimal policies under the ESR criterion has yet to be determined. In Section \ref{sec:esr}, we define the requirements necessary to compute policies under the ESR criterion. The applicability of MORL to many real-world scenarios under the ESR criterion is limited because no solution set has been defined for scenarios when a user's utility function is unknown. In Section \ref{sec:stochastic_dominance_esr}, we show how first-order stochastic dominance can be used to define sets of optimal policies under the ESR criterion. In Section \ref{sec:esr_solution_sets}, we expand first-order stochastic dominance to define a new dominance criterion, called expected scalarised returns (ESR) dominance. This work proposes that ESR dominance can be used to compute a set of optimal policies, which we define as the \emph{ESR set}. Finally, we present a novel multi-objective tabular distributional reinforcement learning algorithm (MOTDRL) which aims to learn the \emph{ESR set} in scenarios when the utility function of the user is unknown. We apply MOTDRL to two different multi-objective multi-armed bandit settings where MOTDRL is able to learn the \emph{ESR set} in both settings. 


\section{Background}
In this section we introduce necessary background material, including multi-objective reinforcement learning, utility functions, the unknown utility function scenario, multi-objective multi-armed bandits, commonly used optimality criteria in multi-objective decision making, and stochastic dominance.

\subsection{Multi-Objective Reinforcement Learning}
In multi-objective reinforcement learning (MORL) \cite{hayes2021practical}, we deal with decision making problems with multiple objectives, often modelled as a multi-objective Markov decision process (MOMDP). An MOMDP is a tuple, $\mathcal{M} = (\mathcal{S}, \mathcal{A}, \mathcal{T}, \gamma, \mathcal{R})$, where $\mathcal{S}$ and $\mathcal{A}$ are the state and action spaces, $\mathcal{T} \colon \mathcal{S} \times \mathcal{A} \times \mathcal{S} \to \left[ 0, 1 \right]$ is a probabilistic transition function, $\gamma$ is a discount factor determining the importance of future rewards and $\mathcal{R} \colon \mathcal{S} \times \mathcal{A} \times \mathcal{S} \to \mathbb{R}^n$ is an $n$-dimensional vector-valued immediate reward function. In multi-objective reinforcement learning, $n>1$.

\subsection{Utility Functions}
In MORL, utility functions are used to model a user's preferences. In this work, utility functions map vector returns to a scalar value which represents the user's preferences over the returns,
\begin{equation}
    u : \mathbb{R}^{n} \rightarrow \mathbb{R},
\end{equation}
where $u$ is a utility function and $\textbf{R}^{n}$ is an n-dimensional vector. 
Linear utility functions are widely used to represent a user's preferences,
\begin{equation}
    \label{eqn:linear_utilityfunction}
    u = \sum_{i=1}^{n} w_{i}r_{i},
\end{equation}
where $w_{i}$ is the preference weight and $r_{i}$ is the value at position $i$ of the return vector. However, certain scenarios exist where linear utility functions cannot accurately represent a user's preferences. In this case, the user's preferences must be represented using a non-linear utility function.

In this paper, we consider monotonically increasing utility functions \cite{roijers2013survey}, i.e.,
\begin{equation}
 (\forall \, i, V_{i}^{\pi} \geq V_{i}^{\pi'} \wedge \exists \, i, V_{i}^{\pi} > V_{i}^{\pi'}) \implies (\forall \, u, u(\textbf{V}^{\pi}) > u(\textbf{V}^{\pi'})),   
\end{equation}

\noindent where $\mathbf{V}^{\pi}$ and $\mathbf{V}^{\pi'}$ are the values of executing policies $\pi$ and $\pi'$ respectively.

It is important to note, a monotonically increasing utility function also includes linear utility functions of the form in Equation \ref{eqn:linear_utilityfunction}. In certain scenarios the utility function may be unknown, therefore we do not know the shape of the utility function. If we assume the utility function is monotonically increasing we know that, if the value of one of the objectives in the return vector increases, then the utility also increases \cite{roijers2013survey}. This assumption makes it possible to determine a partial ordering over policies when the shape of the utility function is unknown. In this work we make no assumptions about the shape of the utility function but rather we assume the utility function is monotonically increasing.

\subsection{The Unknown Utility Function Scenario}
In MORL, a user's preferences over objectives can be modelled as a utility function \cite{roijers2013survey}. However, a user's utility function is often unknown at the time of learning or planning. In the taxonomy of multi-objective decision making (MODeM), this is known as the unknown utility function scenario (see Figure \ref{fig:unknown_utility_function_scenario}), where a set of optimal policies must be computed and returned to the user \cite{hayes2021practical}. In the unknown utility function scenario there are three phases: the learning or planning phase, the selection phase, and the execution phase. In the learning or planning phase a multi-objective planning or learning algorithm is deployed in a MOMDP. Given the utility function is unknown, the MORL algorithm computes a set of optimal policies during the learning or planning. During the selection phase, the user's preferences over objectives becomes known and the user selects a policy from the set of optimal policies that best reflects their preferences. Finally, during the execution phase the selected policy is executed.

\begin{figure}
\centering

\begin{tikzpicture}[>={Stealth[width=6pt,length=9pt]}, skip/.style={draw=none}, shorten >=1pt, accepting/.style={inner sep=1pt}, auto]
\draw (-80.0pt,8.0pt) node[below](0) {\begin{tabular}{c} MOMDP \end{tabular}} ;

\draw (-15.0pt, 0.0pt) node[rectangle, thick, fill=gray!20, minimum height=1cm,minimum width=0.5cm, draw, label=below:\begin{tabular}{c} \emph{planning or} \\ \emph{learning phase} \end{tabular}](1){$\text{algorithm}$};

\draw (45.0pt,14.5pt) node[below, minimum height=1cm,minimum width=0.5cm](2){\begin{tabular}{c} solution \\ set \end{tabular}} ;

\draw (90.0pt,50.0pt) node[below, minimum height=1cm,minimum width=0.5cm](5){\begin{tabular}{c} utility \\ function \end{tabular}} ;

\draw (90.0pt,0.0pt) node[below, minimum height=1cm,minimum width=0.5cm](6){} ;

\draw (130.0pt, 0.0pt) node[rectangle, thick, fill=gray!20, minimum height=1cm,minimum width=0.5cm, draw, label=below:$\text{\emph{selection phase}}$](3){\begin{tabular}{c} user \\ selection \end{tabular}};

\draw (210.0pt,14.5pt) node[below, minimum height=1cm,minimum width=0.5cm, label=below:\begin{tabular}{c} \emph{execution} \\ \emph{phase}\end{tabular}](4){\begin{tabular}{c} single \\ solution \end{tabular}} ;

\draw [dashed] (75.0pt,25pt) -- (75.0pt,-40.0pt);
\draw [dashed] (170.0pt,25pt) -- (170.0pt,-40.0pt);

\path[->] (0) edge node{} (1);
\path[->] (1) edge node{} (2);
\path[->] (2) edge node{} (3);
\path[->] (3) edge node{} (4);
\path[->] (5) edge node{} (6);
\end{tikzpicture}
\caption{The unknown utility function scenario \cite{hayes2021practical}.}
\label{fig:unknown_utility_function_scenario}
\end{figure}
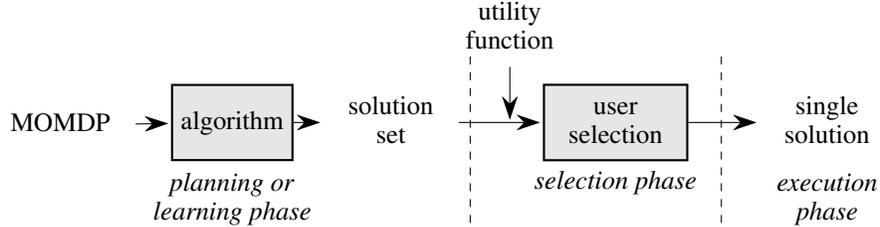

\subsection{Multi-Objective Multi-Armed Bandits}
Multi-objective multi-armed bandits (MOMAB) \cite{drugan2013momab} are a natural extension of multi-armed bandits, where each arm returns an n-dimensional reward vector $\textbf{R}^{n}$, where n is the number of objectives. At each timestep, $t$, the agent must select an arm, $i$, and receives a reward vector. The returns in an MOMAB setting can be deterministic \cite{drugan2013momab} or stochastic \cite{auer2016}. Many algorithms focus on the MOMAB setting and learn a set of arms that are optimal \cite{drugan2013momab,oner2018combinatorial,roijers2017interactive,yahyaa2015Thompson}. 

For example, Pareto UCB-1 \cite{drugan2013momab} is an algorithm that can learn a set of optimal policies in an MOMAB setting. Pareto UCB-1 \cite{drugan2013momab} initially selects each arm once, then at each timestep the algorithm computes the mean vector of each of the multi-objective arms and adds the upper confidence bound to the mean return vector. Using this method Pareto UCB-1, can learn the Pareto front in an MOMAB setting.

\subsection{Scalarised Expected Returns and Expected Scalarised Returns}
\label{sec:esr_versus_ser}
For MORL, the ability to express a user's preferences over objectives as a utility function is essential when learning a single optimal policy. In MORL, different optimality criteria exist \cite{roijers2013survey}.  
Additionally, the utility function can be applied to the expectation of the returns, or the utility function can be applied directly to the returns before computing the expectation. Calculating the expected value of the return of a policy before applying the utility function leads to the scalarised expected returns (SER) optimisation criterion: 

\begin{equation}
    V_{u}^{\pi} = u\left(\mathbb{E} \left[ \sum\limits^\infty_{t=0} \gamma^t {\textbf{r}}_t \:\middle|\: \pi, \mu_0 \right]\right),
    \label{eqn:ser}
\end{equation}
where $\mu_0$ is the probability distribution over possible starting states.

SER is the most commonly used criterion in the multi-objective (single agent) planning and reinforcement learning literature \cite{roijers2013survey}. For SER, a coverage set is defined as a set of optimal solutions for all possible utility functions. If the utility function is instead applied to the returns before computing the expectation, this leads to the expected scalarised returns (ESR) optimisation criterion \cite{roijers2018multi,hayes2021dmcts,roijers2013survey}: 
\begin{equation}
    V_{u}^{\pi} = \mathbb{E} \left[ u\left( \sum\limits^\infty_{t=0} \gamma^t {\textbf{r}}_t \right) \:\middle|\: \pi, \mu_0 \right].
    \label{eqn:esr}
\end{equation}
ESR is the most commonly used criterion in the game theory literature on multi-objective games~\cite{radulescu2020survey}.

\subsection{Stochastic Dominance}
\label{sec:stochastic_dominance}
Stochastic dominance \cite{hadar1969,bawa1975} gives a partial order between distributions and can be used when making decisions under uncertainty (see Figure \ref{fig:esr_cdf}). Stochastic dominance is particularly useful when a distribution must be taken into consideration rather than an expected value when making decisions. Stochastic dominance is a prominent dominance criterion in finance, economics and decision theory. When making decisions under uncertainty, stochastic dominance can be used to determine the most risk averse decision. Various degrees of stochastic dominance exist, however, in this paper we focus on first-order stochastic dominance (FSD). FSD can be used to give a partial ordering over random variables or random vectors to give an FSD dominant set.

\begin{figure}
\centering
    \includegraphics[height = 7cm, width=9cm]{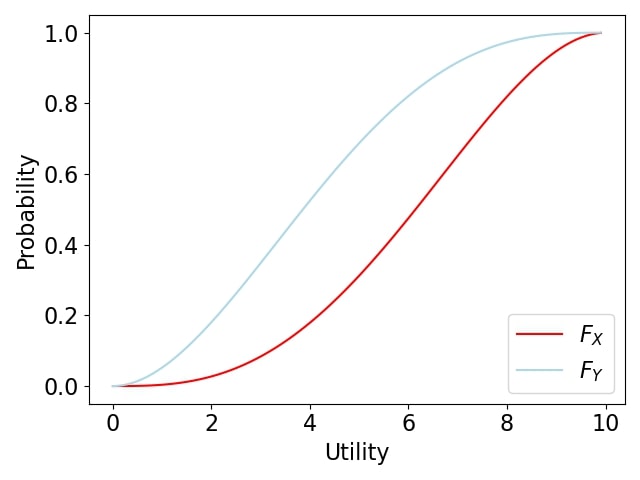}
    \caption{For random variables $X$ and $Y$, $X$ $\succeq_{FSD}$ $Y$, where $F_{X}$ and $F_{Y}$ are the cumulative distribution functions (CDFs) of $X$ and $Y$ respectively. In this case, $X$ is preferable to $Y$ because higher utilities occur with greater frequency in $F_{X}$.}
    \label{fig:esr_cdf}
\end{figure}

In Definition \ref{definition:sd_conditions} we present the necessary conditions for FSD and in Theorem \ref{theorem:fsd_expectedval} we prove that if a random variable is FSD dominant it has at least as high an expected value as another random variable \cite{wolfstetter_1999}. We use the work of Wolfstetter \cite{wolfstetter_1999} to prove Theorem \ref{theorem:fsd_expectedval}.

\begin{definition}
\label{definition:sd_conditions}
For random variables X and Y, X $\succeq_{FSD}$ Y if:
\[
P(X > z) \geq P(Y > z), \forall \, z
\]
\end{definition}
If we consider the cumulative distribution function (CDF) of X, $F_{X}$, and the CDF of Y, $F_{Y}$, we can say that X $\succeq_{FSD}$ Y if:
\[
F_{X}(z) \leq F_{Y}(z), \forall \, z.
\]

\begin{theorem}
\label{theorem:fsd_expectedval}
If X $\succeq_{FSD}$ Y, then X has a greater than or equal expected value as Y.
\[
X \succeq_{FSD} Y \implies E(X) \geq E(Y).
\]
\end{theorem}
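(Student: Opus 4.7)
The plan is to express the expectation of a random variable as an integral of its survival function (or equivalently its CDF) and then invoke the pointwise inequality $F_X(z) \le F_Y(z)$ guaranteed by FSD to conclude the inequality on expectations.

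First I would recall the well-known identity that for any real-valued random variable $Z$ with finite expectation,
\begin{equation}
E(Z) = \int_{0}^{\infty} \bigl(1 - F_{Z}(z)\bigr)\, dz \; - \; \int_{-\infty}^{0} F_{Z}(z)\, dz,
\end{equation}
which specialises to $E(Z) = \int_{0}^{\infty} P(Z > z)\, dz$ when $Z$ is non-negative. This identity can be proved either by Fubini's theorem applied to the indicator $\mathbf{1}\{0 < z < Z\}$ (and its negative counterpart), or by integration by parts on the Lebesgue--Stieltjes integral $\int z\, dF_Z(z)$. I would sketch the non-negative case and note that the general case follows by splitting $Z = Z^{+} - Z^{-}$.

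Next, I would apply this identity to both $X$ and $Y$ and subtract, obtaining
\begin{equation}
E(X) - E(Y) = \int_{0}^{\infty} \bigl(F_{Y}(z) - F_{X}(z)\bigr)\, dz \; + \; \int_{-\infty}^{0} \bigl(F_{Y}(z) - F_{X}(z)\bigr)\, dz.
\end{equation}
By the FSD hypothesis in Definition \ref{definition:sd_conditions}, $F_{X}(z) \le F_{Y}(z)$ for every $z$, so each integrand is non-negative pointwise, and therefore $E(X) - E(Y) \ge 0$.

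The only subtle point is justifying the integral representation of the expectation and ensuring the integrals converge, which I would handle by assuming (as is standard in this setting) that $E(X)$ and $E(Y)$ exist and are finite; for non-negative random variables the representation is immediate and no such caveat is needed. The FSD inequality then transfers directly to the integrals, giving the desired conclusion $E(X) \ge E(Y)$.
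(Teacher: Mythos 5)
Your proposal is correct and follows essentially the same route as the paper: express the expectation via the integral of the survival function (equivalently, $\int (1-F)\,dz$), subtract, and apply the pointwise CDF inequality from the FSD hypothesis. The only difference is that you include the $-\int_{-\infty}^{0} F_Z(z)\,dz$ term so the identity holds for general real-valued random variables, whereas the paper states only the non-negative form $\mathbb{E}(X)=\int_{0}^{+\infty}(1-F_X(x))\,dx$; this is a minor refinement of the same argument rather than a different approach.
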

\begin{proof}
By a known property of expected values the following is true for any random variable:
\[
\mathbb{E}(X) = \int_{0}^{+\infty} (1 - F_{X}(x)) \,dx
\]
\[
\mathbb{E}(Y) =  \int_{0}^{+\infty} (1 - F_{Y}(x)) \,dx
\]

Therefore, if X $\succeq_{FSD}$ Y then:
\[
\int_{0}^{+\infty} (1 - F_{X}(x)) \,dx \geq \int_{0}^{+\infty} (1 - F_{Y}(x)) \,dx
\]

Which gives,
\[
\mathbb{E}(X) \geq \mathbb{E}(Y).
\]
\cite{wolfstetter_1999} \end{proof}

\section{Expected Scalarised Returns}
\label{sec:esr}
In contrast to single-objective reinforcement learning, different optimality criteria exist for MORL. In scenarios where the utility of a user is derived from multiple executions of a policy, the agent should optimise over the scalarised expected returns (SER) criterion. In scenarios where the utility of a user is derived from a single execution of a policy, the agent should optimise over the expected scalarised returns (ESR) criterion. Let us consider, as an example, a power plant that generates electricity for a city and emits harmful $CO_2$ and greenhouse gases. City regulations have been imposed which limit the amount of pollution that the power plant can generate. If the regulations require that the emissions from the power plant do not exceed a certain amount over an entire year, the SER criterion should be optimised. In this scenario, the regulations allow for the pollution to vary day to day, as long as the emissions do not exceed the regulated level for a given year. However, if the regulations are much stricter and the power plant is fined every day it exceeds a certain level of pollution, it is beneficial to optimise under the ESR criterion.

The majority of MORL research focuses on linear utility functions. However, in the real world, a user's utility function can be non-linear. For example, a utility function is non-linear in situations where a minimum value must be achieved on each objective \cite{ocallaghan2021exploring}. Focusing on linear utility functions limits the applicability of MORL in real-world decision making problems. For example, linear utility functions cannot be used to learn policies in concave regions of the Pareto front \cite{vamplew2008limitations}. Furthermore, if a user's preferences are non-linear, these are fundamentally incompatible with linear utility functions. In this case, strictly multi-objective methods must be used to learn optimal policies for non-linear utility functions. In MORL, for non-linear utility functions, different policies are preferred when optimising under the ESR criterion versus the SER criterion \cite{radulescu2020utility}. It is important to note that, for linear utility functions, the distinction between ESR and SER does not exist \cite{radulescu2020survey}.

For example, a decision maker has to choose between the following lotteries, $L_{1}$ and $L_{2}$, which are highlighted in Table \ref{table:esr_ser_lotteries}.

\begin{table}[h]
    \centering
    \begin{tabular}{| c | c |}
    \multicolumn{2}{c}{$L_{1}$} \\
    \hline
         P($L_{1}$= $\textbf{R}$) & \textbf{R} \\
    \hline
         0.5 & (4, 3) \\
    \hline 
          0.5 & (2, 3) \\
    \hline
    \end{tabular}
    \quad
    \begin{tabular}{| c | c |}
    \multicolumn{2}{c}{$L_{2}$} \\
    \hline
         P($L_{2}$=$\textbf{R}$) & \textbf{R} \\
    \hline
         0.9 & (1, 3) \\
    \hline 
          0.1 & (10, 2) \\
    \hline
    \end{tabular}
    \caption{A lottery, $L_{1}$, has two possible returns, (4, 3) and (2, 3), each with a probability of 0.5. A lottery, $L_{2}$, has two possible returns, (1, 3) with a probability of 0.9 and (10, 2) with a probability of 0.1.}
    \label{table:esr_ser_lotteries}
\end{table}
The decision maker has the following non-linear utility function:
\begin{equation}
\label{eqn:esr_ser_utility_function}
    u(\textbf{x}) = x_{1}^2 + x_{2}^2,
\end{equation}
\noindent where $\textbf{x}$ is a vector returned from $\mathbf{R}$ in Table \ref{table:esr_ser_lotteries}, and $x_{1}$ and $x_{2}$ are the values of two objectives. Note that this utility function is monotonically increasing for $x_{1} \geq 0$ and $x_{2} \geq 0$.
Under the SER criterion, the decision maker will compute the expected value of each lottery, apply the utility function, and select the lottery that maximises their utility function. Let us consider which lottery the decision maker will play under the SER criterion:

\[
L_{1}: \,E(L_{1}) = 0.5(4, 3) + 0.5(2, 3) = (2, 1.5) + (1, 1.5) = (3, 3)
\]
\[
L_{1}: \,u(E(L_{1})) = (3^{2} + 3^{2}) = 9 + 9 = 18
\]
\[
L_{2}: \,E(L_{2}) = 0.9(1, 3) + 0.1(10, 2) = (0.9, 2.7) + (1, 0.2) = (1.9, 2.9)
\]
\[
L_{2}: \,u(E(L_{2})) = (1.9^{2} + 2.9^{2}) = 3.61 + 8.41 = 12.02
\]
Therefore, a decision maker with the utility function in Equation \ref{eqn:esr_ser_utility_function} will prefer to play lottery $L_{1}$ under the SER criterion.

Under the ESR criterion, the decision maker will first apply the utility function to the return vectors, compute the expectation, and select the lottery to maximise their utility function. Let us consider how a decision maker will choose which lottery to play under the ESR criterion:
\[
L_{1}: \,\mathbb{E}(u(L_{1})) = 0.5(u(4, 3)) + 0.5(u(2, 3)) = 0.5(4^{2} +3^{2}) + 0.5(2^{2} + 3^{2})
\]
\[
= 0.5(25) + 0.5(13) = 12.5 + 6.5 = 19
\]
\[
L_{2}: \,\mathbb{E}(u(L_{2})) = 0.9(u(1, 3)) + 0.1(u(10, 2)) = 0.9(1^{2} + 3^{2}) + 0.1(10^{2} + 2^{2}) 
\]
\[
= 0.9(10) + 0.1(104) = 9 + 10.4 = 19.4 
\]
Therefore, a decision maker with the utility function in Equation \ref{eqn:esr_ser_utility_function} will prefer to play lottery $L_{2}$  under the ESR criterion. From the example, it is clear that users with the same non-linear utility function can prefer different policies, depending on which multi-objective optimisation criterion is selected. Therefore, it is critical that the distinction ESR and SER is taken into consideration when selecting a MORL algorithm to learn optimal policies in a given scenario. 
The majority of MORL research focuses on the SER criterion \cite{radulescu2020survey}. By comparison, the ESR criterion has received very little attention from the MORL community \cite{roijers2013survey,hayes2021dmcts,roijers2018multi,radulescu2020survey}. Many of the traditional MORL methods cannot be used when optimising under the ESR criterion, given non-linear utility functions in MOMDPs do not distribute across the sum of immediate and future returns which invalidates the Bellman equation \cite{roijers2018multi},

\begin{equation}
\begin{split}
\max_\pi\ &\mathbb{E} \left[u\left({\bf R}_t^- + \sum_{i=t}^{\infty} \gamma^i {\bf r}_i\right)\ \middle|\ \pi, s_t \right] \not= \\
&u({\bf R}_t^-) + \max_\pi \mathbb{E}\left[  u\left( \sum_{i=t}^{\infty} \gamma^i {\bf r}_i\right)\ \middle|\ \pi, s_t \right],
\end{split}
\end{equation}
where $u$ is a non-linear utility function and $\bf{R^{-}_{t}}$ $=$ $\sum_{i=0}^{t - 1} \gamma^i {\bf r}_i$.

An example of an algorithm that can learn policies for non-linear utility functions and the ESR criterion is distributional Monte Carlo tree search (DMCTS) \cite{hayes2021dmcts}. Hayes et al. \cite{hayes2021dmcts} use Monte Carlo tree search to calculate the returns of a full policy and compute a posterior distribution over the expected utility of individual policy executions. DMCTS achieves state of the art performance under the ESR criterion. Hayes et al. \cite{hayes2021dmcts} demonstrate that, when optimising under the ESR criterion, making decisions based on a distribution over the utility of the returns is particularly useful when learning in realistic problems where rewards are stochastic. 

However, DMCTS and other MORL algorithms that optimise for the ESR criterion \cite{roijers2013survey,roijers2018multi,malerba2021esr} require the utility function of a user to be known a priori. In practice, many scenarios exist where a user's utility function may be unknown at the time of learning or planning. To compute policies under the ESR criterion when a user's utility function is unknown, a distribution over the returns must be maintained. To highlight why a distribution over the returns must be used when the utility function of a user is unknown, let us consider the following example in Table~\ref{table:distribution_esr_lotteries}.\footnote{Generally, in the unknown utility function scenario a set of optimal policies is calculated. Under the ESR criterion, a set of optimal policies has yet be defined. Therefore, this example does not calculate a set of optimal policies but instead illustrates why a distribution over the returns is required under the ESR criterion. We define a set of optimal policies under the ESR criterion in a later section.}

\begin{table}

    \centering
    \begin{tabular}{| c | c |}
    \multicolumn{2}{c}{$L_{3}$} \\
    \hline
         P($L_{3}$= $\textbf{R}$) & \textbf{R} \\
    \hline
         0.5 & (-20, 1) \\
    \hline 
          0.5 & (20, 3) \\
    \hline
    \end{tabular}
    \quad
    \begin{tabular}{| c | c |}
    \multicolumn{2}{c}{$L_{4}$} \\
    \hline
         P($L_{4}$=$\textbf{R}$) & \textbf{R} \\
    \hline
         0.9 & (0, 2) \\
    \hline 
          0.1 & (5, 2) \\
    \hline
    \end{tabular}
    \caption{A lottery, $L_{3}$, has two possible returns, (-20, 1) and (20, 3), each with a probability of 0.5. A lottery, $L_{4}$, has two possible returns, (0, 2) with a probability of 0.9 and (5, 2) with a probability of 0.1.}
    \label{table:distribution_esr_lotteries}
\end{table}

To determine which lottery to play while optimising for the ESR criterion, the utility function must first be applied, then the expected utility can be computed (see Equation \ref{eqn:esr}):

\[
u(L_{3}) = u((-20, 1)) + u((20, 3))
\]
\[
\mathbb{E}(u(L_{3})) = 0.5(u((-20, 1))) + 0.5(u((20, 3)))
\]
\[
u(L_{4}) = u((0, 2)) + u((5, 2))
\]
\[
\mathbb{E}(u(L_{4})) = 0.9(u((0, 2))) + 0.1(u((5, 2)))
\]
Given the utility function is unknown, it impossible to compute the expected utility. Moreover, a distribution over the returns received from a policy execution must be maintained in order to optimise for the ESR criterion. Maintaining a distribution over the returns ensures the expected utility can be computed once the user's utility function becomes known at decision time.


As demonstrated above, maintaining a distribution over the returns is critical to learning optimal policies when the utility function of a user is unknown. Therefore, to compute a set of optimal policies under the ESR criterion it is necessary to adopt a distributional approach.


To adopt a distributional approach to multi-objective decision making, we must first introduce a multi-objective version of the return distribution \cite{bellemare2017distributional} \footnote{Bellemare et al. \cite{bellemare2017distributional} introduce a value distribution. However given the distribution is a distribution over the returns, not values, we prefer the term return distribution.}, $\textbf{Z}^{\pi}$. A return distribution, $\textbf{Z}^{\pi}$, is equivalent to a multivariate distribution where a dimension exists per objective. The return distribution, $\textbf{Z}^{\pi}$, gives the distribution over returns of a random vector \cite{sutton1998rl} when a policy $\pi$ is executed, such that,

\begin{equation}
\mathbb{E} \, \textbf{Z}^{\pi} =  \mathbb{E} \left[  \sum\limits^\infty_{t=0} \gamma^t {\textbf{r}}_t  \:\middle|\: \pi, \mu_0 \right].
\end{equation}
Moreover, a return distribution can be used to represent policies. Under the ESR criterion, the utility-of-the-return-distribution, $Z_{u}^{\pi}$, is defined as a distribution over the scalar utilities received from applying the utility function to each vector in the return distribution, $\textbf{Z}^{\pi}$. Therefore, $Z_{u}^{\pi}$ is a distribution over the scalar utility of vector returns of a random vector received from executing a policy $\pi$, such that,
\begin{equation}
\mathbb{E} \, Z_{u}^{\pi} =  \mathbb{E} \left[ u\left( \sum\limits^\infty_{t=0} \gamma^t {\textbf{r}}_t \right) \:\middle|\: \pi, \mu_0 \right].
\end{equation}
The utility-of-the-return-distribution can only be calculated when the utility function is known a priori.

When the utility function of a user is unknown, a set of policies that are optimal for all monotonically increasing utility functions must be learned. However, for the ESR criterion, a set of optimal solutions has yet to be defined. To learn a set of optimal policies under the ESR criterion we must develop new methods. In Section \ref{sec:stochastic_dominance_esr} we apply first-order stochastic dominance to determine a partial ordering over return distributions under the ESR criterion.

\section{Stochastic Dominance for ESR}
\label{sec:stochastic_dominance_esr}
For MORL there are two classes of algorithms: single-policy and multi-policy algorithms \cite{vamplew2011evaluation_methods,roijers2013survey}. When the user's utility function is known a priori, it is possible to use a single-policy algorithm \cite{hayes2021dmcts,roijers2018multi} to learn an optimal solution. However, when the user's utility function is unknown we aim to learn a set of policies that are optimal for all monotonically increasing utility functions. The current literature on the ESR criterion focuses only on scenarios where the utility function of a user is known \cite{hayes2021dmcts,roijers2018multi}, overlooking scenarios where the utility function of a user is unknown. Moreover, a set of solutions under the ESR criterion for the unknown utility function scenario \cite{roijers2013survey} has yet to be defined.

Various algorithms have been proposed to learn solution sets under the SER criterion (see Section \ref{sec:esr_versus_ser}), for example \cite{wang2012multi,vanmoffaert2014multiobjective,roijer2014ols}. Under the SER criterion, multi-policy algorithms determine optimality by comparing policies based on the utility of expected value vectors (see Equation \ref{eqn:ser}). In contrast, under the ESR criterion it is crucial to maintain a distribution over the utility of possible vector-valued outcomes. SER multi-policy algorithms cannot be used to learn optimal policies under the ESR criterion because they compute expected value vectors. It is necessary to develop new methods that can generate solution sets for the ESR criterion with unknown utilities. The development of methods that determine an optimal partial ordering over return distributions is a promising avenue to address this challenge.

First-order stochastic dominance (see Section \ref{sec:stochastic_dominance}) is a method which gives a partial ordering over random variables \cite{wolfstetter_1999,levy1992survey}. FSD compares the cumulative distribution functions (CDFs) of the underlying probability distributions of random variables to determine optimality. When computing policies under the ESR criterion, it is essential that the expected utility is maximised. To use FSD for the ESR criterion, we must show the FSD conditions presented in Section \ref{sec:stochastic_dominance} also hold when optimising the expected utility for unknown monotonically increasing utility functions. 

For the single-objective case, Theorem \ref{theorem:utility_fsd} proves for random variables X and Y, if X $\succeq_{FSD}$ Y, the expected utility of X is greater than, or equal to, the expected utility of Y for monotonically increasing utility functions. In Theorem \ref{theorem:utility_fsd}, random variables X and Y are considered, and their corresponding CDFs $F_{X}$, $F_{Y}$. The work of Mas-Colell et al. \cite{mas1995microeconomic} is used as a foundation for Theorem \ref{theorem:utility_fsd}.

\begin{theorem}
\label{theorem:utility_fsd}
A random variable, X, is preferred to a random variable, Y, for all decision makers with a monotonically increasing utility function if, X $\succeq_{FSD}$ Y. 
\[
X \succeq_{FSD} Y \implies \mathbb{E}(u(X)) \geq \mathbb{E}(u(Y))
\]
\end{theorem}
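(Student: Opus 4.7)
The plan is to reduce the theorem to the already established Theorem \ref{theorem:fsd_expectedval} by first showing that first-order stochastic dominance is preserved under composition with any monotonically increasing $u$; that is, $X \succeq_{FSD} Y$ implies $u(X) \succeq_{FSD} u(Y)$ as real-valued random variables. Once that intermediate fact is in place, applying Theorem \ref{theorem:fsd_expectedval} to the pair $u(X), u(Y)$ immediately gives $\mathbb{E}(u(X)) \geq \mathbb{E}(u(Y))$.

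To establish the intermediate fact, I would verify the condition of Definition \ref{definition:sd_conditions} directly: fix an arbitrary threshold $w \in \mathbb{R}$ and show $P(u(X) > w) \geq P(u(Y) > w)$. Because $u$ is monotonically increasing, the preimage $u^{-1}((w, \infty))$ is an upward-closed subset of $\mathbb{R}$, so it takes the form $(a_w, \infty)$ or $[a_w, \infty)$, where $a_w = \inf\{x : u(x) > w\}$ (with the usual conventions if the infimum is $\pm\infty$, in which case the inequality is trivial). Hence $P(u(X) > w)$ equals either $P(X > a_w)$ or $P(X \geq a_w)$, and likewise for $Y$.

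In the open-tail case, the FSD hypothesis $P(X > a_w) \geq P(Y > a_w)$ is exactly what is needed. In the closed-tail case, I would exploit continuity of probability from below, writing $P(X \geq a_w) = \lim_{\varepsilon \to 0^{+}} P(X > a_w - \varepsilon)$, and applying the FSD hypothesis at each $z = a_w - \varepsilon$ to obtain $P(X > a_w - \varepsilon) \geq P(Y > a_w - \varepsilon)$; passing to the limit yields $P(X \geq a_w) \geq P(Y \geq a_w)$.

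The main obstacle is this closed-tail edge case, which arises when $u$ is not strictly increasing or has a jump at $a_w$; the limiting argument handles it, but its presence is the only place where one has to look beyond a literal one-line substitution into Definition \ref{definition:sd_conditions}. Aside from this care, the argument is a short reduction to the already-proved scalar result.
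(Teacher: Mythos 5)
Your proof is correct, but it takes a genuinely different route from the paper. The paper's proof works analytically: it writes $\mathbb{E}(u(X)) = \int u(z)\, dF_X(z)$, integrates by parts, cancels the equal boundary terms, and concludes from $F_X(z) \leq F_Y(z)$ together with $u'(z) \geq 0$. That argument is a quick two-line computation, but it implicitly assumes $u$ is differentiable (it needs $u'$). Your reduction --- show that $X \succeq_{FSD} Y$ implies $u(X) \succeq_{FSD} u(Y)$ by analysing the upward-closed preimage $u^{-1}((w,\infty))$, then invoke Theorem \ref{theorem:fsd_expectedval} --- avoids any smoothness assumption on $u$ and so is strictly more general; the price is the closed-tail edge case, which you handle correctly with the limiting argument (note, though, that the events $\{X > a_w - \varepsilon\}$ decrease to $\{X \geq a_w\}$ as $\varepsilon \to 0^+$, so the relevant property is continuity of probability from above, not below). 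One caveat to flag: Theorem \ref{theorem:fsd_expectedval} as stated in the paper uses the identity $\mathbb{E}(X) = \int_0^{+\infty}(1 - F_X(x))\,dx$, which is valid only for non-negative random variables, so your reduction is airtight only when $u \geq 0$ (or once that theorem is restated with the general layer-cake formula $\int_0^{+\infty}(1-F)\,dx - \int_{-\infty}^0 F\,dx$, under which the same monotonicity argument still goes through). Since the paper itself is equally loose on this point, this is an inherited rather than a new gap.
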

\begin{proof}
If X $\succeq_{FSD}$ Y, then\footnote{CDFs with lower probability values for a given $z$ are preferable. Figure \ref{fig:esr_cdf} explains why this is the case.},
\[
F_{X}(z) \leq F_{Y}(z), \forall \, z
\]
Since, 
\[
\mathbb{E}(u(X)) =  \int_{-\infty}^{\infty} u(z) dF_{X}(z)
\]
\[
\mathbb{E}(u(Y)) = \int_{-\infty}^{\infty} u(z) dF_{Y}(z)
\]
When integrating both $\mathbb{E}(u(X))$ and $\mathbb{E}(u(Y))$ by parts, the following results is generated:
\[
\mathbb{E}(u(X)) = [u(z)F_{X}(z)]_{-\infty}^{\infty} - \int_{-\infty}^{\infty} u'(z)F_{X}(z) \,dz
\]
\[
\mathbb{E}(u(Y)) = [u(z)F_{Y}(z)]_{-\infty}^{\infty} - \int_{-\infty}^{\infty} u'(z)F_{Y}(z) \,dz
\]
Given $F_X(-\infty)$ = $F_Y(-\infty)$ = 0 and $F_X(\infty)$ = $F_Y(\infty)$ = 1, the first terms in $\mathbb{E}(u(X))$ and $\mathbb{E}(u(Y))$ are equal, and thus
\[
\mathbb{E}(u(X)) -\mathbb{E}(u(Y)) = \int_{-\infty}^{\infty} u'(z)F_{Y}(z) \,dz - \int_{-\infty}^{\infty} u'(z)F_{X}(z) \,dz
\]
Since $F_{X}(z) \leq F_{Y}(z)$ and $u'(z) \geq 0$ for all monotonically increasing utility functions, then
\[
\mathbb{E}(u(X)) - \mathbb{E}(u(Y)) \geq 0
\]
and thus, 
\[
\mathbb{E}(u(X)) \geq \mathbb{E}(u(Y)) 
\]
\end{proof}
A utility function maps an input (scalar or vector return) to an output (scalar utility). Since the probability of receiving some utility is equal to the probability of receiving some return for a random variable, X, we can write the following:
\begin{equation}
\label{eqn:prob_fsd_utility}
P(X > c) = P(u(X) > u(c)),    
\end{equation}
where $c$ is a constant.
Using the results shown in Theorem \ref{theorem:utility_fsd} and Equation \ref{eqn:prob_fsd_utility}, the FSD conditions highlighted in Section \ref{sec:stochastic_dominance} can be rewritten to include monotonically increasing utility functions:
\begin{equation}
\label{eqn:prob_all_utility_FSD}
P(u(X) > u(z)) \geq P(u(Y) > u(z)) 
\end{equation}
\begin{definition}
Let X and Y be random variables. X dominates Y for all decision makers with a monotonically increasing utility function if the following is true:
\[
X \succeq_{FSD} Y \Leftrightarrow
\]
\[
\forall u : \forall {v} : P(u(X) > u({v})) \geq P( u(Y) > u({v})).
\]
\end{definition}

In MORL, the return from the reward function is a vector, where each element in the return vector represents an objective. To apply FSD to MORL under the ESR criterion, random vectors must be considered. In this case, a random vector (or multi-variate random variable) is a vector whose components are scalar-valued random variables on the same probability space. For simplicity, this paper focuses on the case in which a random vector has two random variables, known as the bi-variate case. FSD conditions have been proven to hold for random vectors with $n$ random variables in the works of Sriboonchitta et al. \cite{sriboonchitta2009mvfsd}, Levhari et al. \cite{levhari1975multivariatedistributions}, Nakayama et al. \cite{NAKAYAMA1981} and Scarsini \cite{scarsinimultivariate1988}. In Theorem \ref{theorem:multivariate_FSD}, the work of Atkinson and Bourguignon \cite{atkinson1982bivariate_fsd} is distilled into a suitable Theorem for MORL. Theorem \ref{theorem:multivariate_FSD} highlights how the conditions for FSD hold for random vectors when optimising under the ESR criterion for a monotonically increasing utility function, $u$, where $\frac{\partial^2 u(x_1, x_2)}{\partial x_1 \partial x_2} \le 0$ \cite{richard1975}. It is important to note Atikson and Bourguignon \cite{atkinson1982bivariate_fsd} have shown the conditions for FSD hold for random vectors for utility functions where $\frac{\partial^2 u(x_1, x_2)}{\partial x_1 \partial x_2} \ge 0$. We plan to extend these conditions for MORL in a future work. In Theorem \ref{theorem:multivariate_FSD}, $\textbf{X}$ and $\textbf{Y}$ are random vectors where each random vector consists of two random variables, $\textbf{X} = [X_{1}, X_{2}]$ and $\textbf{Y} = [Y_{1}, Y_{2}]$. $F_{{X_{1}X{_2}}}$ and $F_{Y_{1}Y_{2}}$ are the corresponding CDFs.

\begin{theorem}
\label{theorem:multivariate_FSD}
Assume that $u\ :\ \mathbb{R}_{\ge 0} \times \mathbb{R}_{\ge 0} \rightarrow \mathbb{R}_{\ge 0}$ is a monotonically increasing function, with $\frac{\partial u(x_1, x_2)}{\partial x_1} \ge 0$, $\frac{\partial u(x_1, x_2)}{\partial x_2} \ge 0$ and $\frac{\partial^2 u(x_1, x_2)}{\partial x_1 \partial x_2} \le 0$. If, for random vectors $\mathbf{X}$ and $\mathbf{Y}$, $\mathbf{X} \succeq_{FSD} \mathbf{Y}$, then $\mathbf{X}$ is preferred to $\mathbf{Y}$ by all decision makers, i.e.,

\begin{equation*}
\mathbf{X} \succeq_{FSD} \mathbf{Y} \implies \mathbb{E}(u(\mathbf{X})) \geq \mathbb{E}(u(\mathbf{Y}))
\end{equation*}
\end{theorem}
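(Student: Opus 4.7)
The plan is to adapt the one-dimensional proof of Theorem \ref{theorem:utility_fsd} by integrating by parts in \emph{both} coordinates. Concretely, I would first derive a bivariate Hoeffding-type identity expressing $\mathbb{E}(u(\mathbf{X}))$ in terms of $u(\infty,\infty)$, the marginal CDFs $F_{X_1}$ and $F_{X_2}$, and the joint CDF $F_{X_1 X_2}$, each weighted by the appropriate partial derivative of $u$. Writing the analogous identity for $\mathbf{Y}$ and subtracting will reduce the claim to a sign check on three integrals, where the hypotheses on $u_1$, $u_2$, and $u_{12}$ line up exactly with the FSD comparison $F_{X_1 X_2} \le F_{Y_1 Y_2}$.

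More precisely, applying the fundamental theorem of calculus twice to $u$, starting from $(\infty,\infty)$ and working back to $(x_1, x_2)$, gives the pointwise identity
\begin{equation*}
u(x_1, x_2) = u(\infty, \infty) - \int_{x_1}^\infty u_1(t, \infty)\, dt - \int_{x_2}^\infty u_2(\infty, s)\, ds + \int_{x_1}^\infty \int_{x_2}^\infty u_{12}(t, s)\, dt\, ds.
\end{equation*}
Taking expectations of both sides and using Fubini to swap the deterministic and probabilistic integrals converts each $\int_{X_i}^\infty f\, d\cdot$ into $\int_0^\infty f \cdot F_{X_i}\, d\cdot$, and similarly for the double integral, yielding a closed-form expression for $\mathbb{E}(u(\mathbf{X}))$ purely in terms of the CDFs of $\mathbf{X}$. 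Subtracting the analogous expression for $\mathbf{Y}$ then gives
\begin{align*}
\mathbb{E}(u(\mathbf{X})) - \mathbb{E}(u(\mathbf{Y})) &= \int_0^\infty u_1(t, \infty)[F_{Y_1}(t) - F_{X_1}(t)]\, dt \\
&\quad + \int_0^\infty u_2(\infty, s)[F_{Y_2}(s) - F_{X_2}(s)]\, ds \\
&\quad + \int_0^\infty \int_0^\infty u_{12}(t, s)[F_{X_1 X_2}(t, s) - F_{Y_1 Y_2}(t, s)]\, dt\, ds.
\end{align*}
The marginal inequalities $F_{X_i} \le F_{Y_i}$ follow from the joint FSD assumption by sending the opposite coordinate to $\infty$, so the first two integrals are non-negative thanks to $u_1, u_2 \ge 0$. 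The third integrand is the product of $u_{12} \le 0$ and $F_{X_1 X_2} - F_{Y_1 Y_2} \le 0$, hence also non-negative. Summing the three terms yields $\mathbb{E}(u(\mathbf{X})) \ge \mathbb{E}(u(\mathbf{Y}))$ as desired.

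I expect the hard part to be the technical justification of the bivariate identity, not the sign reasoning. The derivation implicitly requires $u(\infty, \infty)$ to be finite, or at least that the boundary contributions at infinity vanish, and needs enough integrability of $u_1$, $u_2$, $u_{12}$ to invoke Fubini. In the full write-up I would first pin down a clean regularity assumption — for instance, that the random vectors are supported on a bounded subset of $\mathbb{R}_{\ge 0}^2$, or that $u$ together with $u_1$, $u_2$, $u_{12}$ are continuous and bounded — so that every integral in sight is absolutely convergent. Once that set-up is in place the sign check is routine, and essentially all of the mathematical content of the theorem sits in the single bivariate integration-by-parts step.
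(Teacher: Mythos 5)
Your proposal is correct, and it lands on exactly the same three-term decomposition that the paper's proof eventually reaches: a marginal term weighted by $u_1$, a marginal term weighted by $u_2$, and a joint term weighted by $u_{12}$, with the sign of each controlled by $F_{\mathbf{X}} \le F_{\mathbf{Y}}$ and its marginal consequences. The route is genuinely different, though. The paper starts from $\mathbb{E}(u(\mathbf{X})) - \mathbb{E}(u(\mathbf{Y})) = \int\!\!\int u(t,z)\,\Delta_f(t,z)\,dt\,dz$ with $\Delta_f = f_{\mathbf{X}} - f_{\mathbf{Y}}$ and integrates by parts twice in the density, which forces it to (i) assume joint densities exist, (ii) track boundary terms of the form $\lim_{t\to\infty} u(t,z)\,\partial_z \Delta_F(t,z)$, and (iii) invoke a separate monotone-convergence lemma (Beppo Levi) to pull a limit through an integral before a further integration by parts disposes of the remaining term. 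Your approach instead expands $u(x_1,x_2)$ pointwise via the fundamental theorem of calculus anchored at $(\infty,\infty)$ and then takes expectations with Fubini--Tonelli, which never mentions a density, makes all boundary contributions explicit from the outset, and replaces the limit-interchange step with a single Tonelli application (legitimate because each integrand has a fixed sign). What your route buys is generality (no absolute continuity assumption on $\mathbf{X}$, $\mathbf{Y}$) and a cleaner accounting of where each hypothesis on $u_1$, $u_2$, $u_{12}$ is used; what it costs is the need to pin down finiteness of $u(\infty,\infty)$ or bounded support up front, a regularity issue the paper's proof also has but leaves implicit. One small point to tidy in a full write-up: $\mathbb{E}\bigl[\int_{X_1}^{\infty}\int_{X_2}^{\infty} u_{12}\bigr]$ produces $P(X_1 < t, X_2 < s)$ rather than the CDF $P(X_1 \le t, X_2 \le s)$; these agree Lebesgue-almost everywhere in $(t,s)$, so the identity stands, but the discrepancy should be noted when the distributions have atoms.
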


\begin{proof}
As $\mathbf{X} \succeq_{FSD} \mathbf{Y}$, $\forall t, z$ we have
\begin{equation*}
\begin{split}
&F_\mathbf{X}(t, z) \leq F_\mathbf{Y}(t, z),\\
\text{or }& \Delta_F(t, z) = F_\mathbf{X}(t, z) - F_\mathbf{Y}(t, z) \le 0.
\end{split}
\end{equation*}
The expected utility of the random variable $\mathbf{X}$ can be written as follows: 
\begin{equation*}
\E{u(\textbf{X})} = \int^{\infty}_{0} \int^{\infty}_{0} u(t, z)f_\mathbf{X}(t, z)dtdz,
\end{equation*}
where $f$ is the probability density function of $\textbf{X}$. Note that
\begin{equation*}
\begin{split}
\Delta_f(t, z) &= f_\mathbf{X}(t, z) - f_\mathbf{Y}(t, z)\\
&= \frac{\partial^2 \Delta_F(t, z)}{\partial t \partial z}.
\end{split}
\end{equation*}

Using integration-by-parts (I), and the fact that $\Delta_F(t, 0) = \frac{\partial \Delta_F(0, z)}{\partial z} = 0$ (Z), we obtain:
\begin{equation*}
\begin{split}
&\E{u(\textbf{X})} - \E{u(\textbf{Y})}\\
&= \int^{\infty}_{0} \int^{\infty}_{0} u(t, z)\Delta_f(t, z)dtdz\\
&\stackrel{(I)}{=} \int^{\infty}_{0} \inteval{u(t, z)\frac{\partial \Delta_F(t, z)}{\partial z}}{t=0}{\infty} dz - \int^{\infty}_{0}\int^{\infty}_{0} \frac{\partial u(t, z)}{\partial t}\frac{\partial \Delta_F(t, z)}{\partial z} dtdz
\end{split}
\end{equation*}
\begin{equation*}
\begin{split}
&\stackrel{(I)}{=} \int^{\infty}_{0} \inteval{u(t, z)\frac{\partial \Delta_F(t, z)}{\partial z}}{t=0}{\infty} dz - \int^{\infty}_0\inteval{\frac{\partial u(t, z)}{\partial t}\Delta_F(t, z)}{z=0}{\infty} dt + \\ 
&\int^{\infty}_{0}\int^{\infty}_{0} \frac{\partial^2 u(t, z)}{\partial t\partial z}\Delta_F(t, z) dtdz\\
\end{split}
\end{equation*}
\begin{equation*}
\begin{split}
&\stackrel{(Z)}{=} \int^{\infty}_{0} \lim_{t \rightarrow \infty} u(t, z)\frac{\partial \Delta_F(t, z)}{\partial z} dz - \int^{\infty}_0\lim_{z \rightarrow \infty}\frac{\partial u(t, z)}{\partial t}\Delta_F(t, z) dt + \\ &\int^{\infty}_{0}\int^{\infty}_{0} \frac{\partial^2 u(t, z)}{\partial t\partial z}\Delta_F(t, z) dtdz.
\end{split}
\end{equation*}

Given that $\frac{\partial^2 u(t, z)}{\partial t\partial z} \le 0$, $\frac{\partial u(t, z)}{\partial t} \ge 0$ and $\Delta_F(t, z) \le 0$, we know that the last two terms are positive.
Therefore, we can state that
\begin{equation*}
\begin{split}
&\E{u(\textbf{X})} - \E{u(\textbf{Y})}\\
&= \int^{\infty}_{0} \lim_{t \rightarrow \infty} u(t, z)\frac{\partial \Delta_F(t, z)}{\partial z} dz - \int^{\infty}_0\lim_{z \rightarrow \infty}\frac{\partial u(t, z)}{\partial t}\Delta_F(t, z) dt + \\
\end{split}
\end{equation*}
\begin{equation*}
\begin{split}
&\int^{\infty}_{0}\int^{\infty}_{0} \frac{\partial^2 u(t, z)}{\partial t\partial z}\Delta_F(t, z) dtdz 
\ge \int^{\infty}_{0} \lim_{t \rightarrow \infty} u(t, z)\frac{\partial \Delta_F(t, z)}{\partial z} dz.
\end{split}
\end{equation*}

According to Lemma~\ref{lemma:monotone_convergence} (see Section \ref{sec:supplemental_material}), as $u(t, z) F(t, z)$ is a positive monotonically increasing function in both $t$ and $z$, we know that:
\begin{equation*}
\begin{split}
&\int^\infty_0 \lim_{t \rightarrow \infty} u(t, z)\frac{\partial F(t, z)}{\partial z} dz = \lim_{t \rightarrow \infty} \int^\infty_0 u(t, z)\frac{\partial F(t, z)}{\partial z} dz.\\
\end{split}
\end{equation*}

Using integration-by-parts (I), and the fact that $\Delta_F(t, 0) = 0$ (Z), we have:
\begin{equation*}
\begin{split}
&\E{u(\textbf{X})} - \E{u(\textbf{Y})}\\
&\ge \lim_{t \rightarrow \infty} \int^\infty_0 u(t, z)\frac{\partial \Delta_F(t, z)}{\partial z} dz\\
&\stackrel{(I)}{=} \lim_{t \rightarrow \infty} \inteval{u(t, z)\Delta_F(t, z)}{0}{\infty} - \lim_{t \rightarrow \infty} \int^\infty_0 \frac{\partial u(t, z)}{\partial z}\Delta_F(t, z) dz\\
&\stackrel{(Z)}{=} \lim_{t \rightarrow \infty} \lim_{z \rightarrow \infty} u(t, z)\Delta_F(t, z) - \lim_{t \rightarrow \infty} \int^\infty_0 \frac{\partial u(t, z)}{\partial z}\Delta_F(t, z) dz.\\
\end{split}
\end{equation*}

Finally, given that $\frac{\partial u(t, z)}{\partial z} \ge 0$ and $\Delta_F(t, z) \le 0$, we know that:
\begin{equation*}
\begin{split}
&\E{u(\textbf{X})} - \E{u(\textbf{Y})}\\
&\ge \lim_{t \rightarrow \infty} \lim_{z \rightarrow \infty} u(t, z)\Delta_F(t, z) - \lim_{t \rightarrow \infty} \int^\infty_0 \frac{\partial u(t, z)}{\partial z}\Delta_F(t, z) dz\\
&\ge 0
\end{split}
\end{equation*}
\end{proof}

Using the results from Theorem \ref{theorem:multivariate_FSD}, Equation \ref{eqn:prob_all_utility_FSD} can be updated to include random vectors,

\begin{equation}
P(u(\textbf{X}) > u(\textbf{z})) \geq P(u(\textbf{Y}) > u(\textbf{z})).
\end{equation}
\begin{definition}
\label{definition:multi_objective_fsd_esr}
For random vectors $\textbf{X}$ and $\textbf{Y}$, $\textbf{X}$ is preferred over $\textbf{Y}$ by all decision makers with a monotonically increasing utility function if, and only if, the following is true:
\[
\textbf{X} \succeq_{FSD} \textbf{Y} \Leftrightarrow
\]
\[
\forall u :  ( \forall {\textbf{v}}: P(u(\textbf{X}) > u({\textbf{v}})) \geq P( u(\textbf{Y}) > u({\textbf{v}})) 
\]
\end{definition}
Using the results from Theorem \ref{theorem:multivariate_FSD} and Definition \ref{definition:multi_objective_fsd_esr}, it is possible to extend FSD to MORL. 
For MORL, under the ESR criterion, the return distribution, $\textbf{Z}^{\pi}$, is considered to be the full distribution of the returns of a random vector received when executing a policy, $\pi$ (see Section \ref{sec:esr}), return distributions can be used to represent policies. In this case, it is possible to use FSD to obtain a partial ordering over policies. For example, consider two policies, $\pi$ and $\pi'$, where each policy has the underlying return distribution $\textbf{Z}^{\pi}$ and $\textbf{Z}^{\pi'}$. If $\textbf{Z}^{\pi} \, \succeq_{FSD} \textbf{Z}^{\pi'}$ then $\pi$ will be preferred over $\pi'$.
\begin{definition}
Policies $\pi$ and $\pi'$ have return distributions $\textbf{Z}^{\pi}$ and $\textbf{Z}^{\pi'}$. Policy $\pi$ is preferred over policy $\pi'$ by all decision makers with a utility function, $u$, that is monotonically increasing if, and only if, the following is true:
\[
\textbf{Z}^{\pi} \succeq_{FSD} \textbf{Z}^{\pi'}.
\]
\end{definition}
Now that a partial ordering over policies has been defined under the ESR criterion for the unknown utility function scenario, it is possible to define a set of optimal policies.

\section{Solution Sets for ESR}
\label{sec:esr_solution_sets}
Section \ref{sec:stochastic_dominance_esr} defines a partial ordering over policies under the ESR criterion for unknown utility functions using FSD. In the unknown utility function scenario, it is infeasible to learn a single optimal policy \cite{roijers2013survey}. When a user's utility function is unknown, multi-policy MORL algorithms must be used to learn a set of optimal policies. To apply MORL to the ESR criterion in scenarios with unknown utility, a set of optimal policies under the ESR criterion must be defined. In Section \ref{sec:esr_solution_sets}, FSD is used to define multiple sets of optimal policies for the ESR criterion.

Firstly, a set of optimal policies, known as the undominated set, is defined. The undominated set is defined using FSD, where each policy in the undominated set has an underlying return distribution that is FSD dominant. The undominated set contains at least one optimal policy for all possible monotonically increasing utility functions.

\begin{definition}
\label{definition:undominanted_esr_set}
The undominated set, $U(\Pi)$, is a sub-set of all possible policies for where there exists some utility function, $u$, where a policy's return distribution is FSD dominant.

\[
	U(\Pi) = \left\{\pi \in \Pi\ \middle|\ \exists u, \forall \pi' \in \Pi : \textbf{Z}^{\pi} \succeq_{FSD} \textbf{Z}^{\pi'}\right\}
\]
\end{definition}
However, the undominated set may contain excess policies. For example, under FSD, if two dominant policies have return distributions that are equal, then both policies will be in the undominated set. Given both return distributions are equal, a user with a monotonically increasing utility function will not prefer one policy over the other. In this case, both policies have the same expected utility.
To reduce the number of policies that must be considered at execution time, for each possible utility function we can keep just one corresponding FSD dominant policy; such a set of policies is called a coverage set (CS).
\begin{definition}
\label{definition:undominanted_cs_set}
The coverage set, $CS(\Pi)$, is a subset of the undominated set, $U(\Pi)$, where, for every utility function, $u$, the set contains a policy that has a FSD dominant return distribution,
\[
	CS(\Pi) \subseteq U(\Pi) \wedge \left(\forall u, \exists \pi \in CS(\Pi), \forall{\pi' \in \Pi} : \mathbf{Z}^{\pi} \succeq_{FSD} \mathbf{Z}^{\pi'} \right)
\]
\end{definition}
In practice, a decision maker may aim to learn the smallest possible set of optimal policies. However, FSD considered in this work does not have a strict inequality condition. Moreover, the undominated set generated using FSD may contain excess policies. Therefore, to compute a coverage set in practice where each optimal policy has a unique return distribution, we define expected scalarised returns dominance (ESR dominance). In contrast to FSD, ESR dominance ensures that an explicitly strict inequality exists.
\begin{definition}
\label{definition:esr_dominance}
For random vectors \textbf{X} and \textbf{Y}, \textbf{X} $\succ_{ESR}$ \textbf{Y} for all decision makers with a monotonically increasing utility function if, and only if, the following is true:
\[
\textbf{X} \succ_{ESR} \textbf{Y} \Leftrightarrow
\]
\[
\forall u :  ( \forall {\textbf{v}}: P(u(\textbf{X}) > u({\textbf{v}})) \geq P( u(\textbf{Y}) > u({\textbf{v}})) 
\]
\[
  \wedge \exists \,\textbf{v} : P( u(\textbf{X}) > u({\textbf{v}})) > P( u(\textbf{Y}) > u({\textbf{v}}))).
\]
\end{definition}

ESR dominance (Definition \ref{definition:esr_dominance}) extends FSD, however, ESR dominance is a more strict dominance criterion. For FSD, policies that have equal return distributions are considered dominant policies, which is not the case under ESR dominance. Therefore, if a random vector is ESR dominant, the random vector has a greater expected utility than all ESR dominated random vectors.
Theorem \ref{theorem:esr_dominance} proves that ESR dominance satisfies the ESR criterion when the utility function of the user is unknown for all monotonically increasing utility functions. Theorem \ref{theorem:esr_dominance} focuses on random vectors $\textbf{X}$ and $\textbf{Y}$ where each random vector has two random variables, such that $\textbf{X} = [X_{1}, X_{2}]$ and $\textbf{Y} = [Y_{1}, Y_{2}]$. $F_{\textbf{X}}$ and $F_{\textbf{Y}}$ are the corresponding CDFs and $\textbf{v} = [t, z]$. However, Theorem \ref{theorem:esr_dominance} can easily be extended for random vectors with $n$ random variables ($\textbf{X} = [X_{1}, X_{2}, ..., X_{n}]$).

\begin{theorem}
\label{theorem:esr_dominance}
A random vector, \textbf{X}, is preferred to a random vector, \textbf{Y}, by all decision makers with a monotonically increasing utility function if, and only if, \textbf{X} $\geq_{ESR}$ \textbf{Y}:
\[
\textbf{X} \succ_{ESR} \textbf{Y} \implies \mathbb{E}(u(\textbf{X})) > \mathbb{E}(u(\textbf{Y}))
\]
\end{theorem}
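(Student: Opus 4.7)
The plan is to bootstrap off Theorem~\ref{theorem:multivariate_FSD} and then upgrade the weak inequality to a strict one using the extra conjunct that distinguishes ESR dominance from FSD. Observe that the first clause in Definition~\ref{definition:esr_dominance} is identical to the FSD condition in Definition~\ref{definition:multi_objective_fsd_esr}, so ESR dominance implies $\textbf{X} \succeq_{FSD} \textbf{Y}$; consequently, Theorem~\ref{theorem:multivariate_FSD} immediately delivers $\mathbb{E}(u(\textbf{X})) \ge \mathbb{E}(u(\textbf{Y}))$. The remaining task is to show that the pointwise strict inequality in Definition~\ref{definition:esr_dominance} forces the expectations apart.

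For the strict step, I would avoid re-tracing the double integration-by-parts chain in Theorem~\ref{theorem:multivariate_FSD} and instead use the layer-cake (tail-sum) identity, which is available because the hypothesis $u\colon \mathbb{R}_{\ge 0}\times \mathbb{R}_{\ge 0}\to \mathbb{R}_{\ge 0}$ guarantees $u(\textbf{X}), u(\textbf{Y}) \ge 0$. Writing
\[
\mathbb{E}(u(\textbf{X})) - \mathbb{E}(u(\textbf{Y})) \;=\; \int_{0}^{\infty} g(s)\, ds, \qquad g(s) \;=\; P(u(\textbf{X}) > s) - P(u(\textbf{Y}) > s),
\]
the FSD part of ESR dominance, applied pointwise in the transformed variable, gives $g(s) \ge 0$ for every $s \ge 0$. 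The strict part furnishes a vector $\textbf{v}^{*}$ with $g(s^{*}) > 0$ at $s^{*} = u(\textbf{v}^{*})$. Since $g$ is the difference of two right-continuous tail functions (equivalently, of two right-continuous CDFs), $g$ itself is right-continuous; hence there is some $\delta > 0$ with $g(s) \ge g(s^{*})/2 > 0$ on $[s^{*}, s^{*}+\delta)$. Integrating yields $\int_{0}^{\infty} g(s)\, ds \ge \delta \, g(s^{*})/2 > 0$, which is exactly the desired $\mathbb{E}(u(\textbf{X})) > \mathbb{E}(u(\textbf{Y}))$.

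The main obstacle I anticipate is the passage from a single point of strict inequality to a set of positive Lebesgue measure. If $u(\textbf{X})$ or $u(\textbf{Y})$ has atoms accumulating at $s^{*}$, one needs to rule out a pathological cancellation that restores equality immediately to the right of $s^{*}$; right-continuity of CDFs is the precise tool that closes this gap, so the argument survives for arbitrary (discrete, continuous, or mixed) return distributions. A secondary bookkeeping point is that Definition~\ref{definition:esr_dominance} requires the strict inequality for every monotonically increasing $u$, not just one, but the layer-cake argument above is carried out one $u$ at a time, so this is handled automatically. The ``only if'' half suggested by the English phrasing of the theorem is not needed for the displayed implication, and I would flag it as out of scope for this proof.
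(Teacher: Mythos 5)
Your proposal is correct and takes a genuinely different route from the paper's proof. The paper also begins by invoking Theorem~\ref{theorem:multivariate_FSD} to get the weak inequality $\mathbb{E}(u(\textbf{X})) \ge \mathbb{E}(u(\textbf{Y}))$, but for the strict step it works with the joint densities $f_{\textbf{X}}, f_{\textbf{Y}}$: it posits a rectangle $[a,b]\times[c,d]$ on which $\int u f_{\textbf{X}} > \int u f_{\textbf{Y}}$, splits the plane into that rectangle plus two outer regions, and concludes that the full integrals are strictly ordered. You instead use the layer-cake identity $\mathbb{E}(W)=\int_0^\infty P(W>s)\,ds$ for the non-negative scalar variables $u(\textbf{X})$ and $u(\textbf{Y})$, reduce everything to the tail-difference function $g(s)$, and use right-continuity of survival functions to show that one point of strict inequality propagates to an interval of positive measure. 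Your approach buys two things the paper's argument does not: it avoids assuming densities exist (so it covers the discrete return distributions the MOTDRL algorithm actually produces), and it explicitly closes the measure-zero loophole, which is exactly the point at which the paper's proof is loosest --- the paper's three-region decomposition does not actually tile $\mathbb{R}^2$ (it omits the cross regions such as $t\le a,\ c\le z\le d$), and the claim that condition~2 yields a rectangle with a strict integral inequality is asserted rather than derived. One small technicality you should acknowledge: the ESR hypothesis gives $g(u(\textbf{v}))\ge 0$ only for $s$ in the range of $u$, whereas the layer-cake integral runs over all $s\ge 0$; for $s$ below $\inf u$ or inside a gap of the range of $u$ the events $\{u(\textbf{X})>s\}$ coincide almost surely with events of the form $\{u(\textbf{X})>u(\textbf{v})\}$ (or limits thereof), so $g\ge 0$ still holds everywhere, but this deserves a sentence. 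As you note, the ``only if'' direction in the theorem's prose is not established by either argument and is best treated as out of scope.
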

\begin{proof}
$\textbf{X}$ and $\textbf{Y}$ are random vectors with $n$ random variables.
If \textbf{X} $\succ_{ESR}$ \textbf{Y} the following two conditions must be met for all $u$:
\begin{enumerate}
    \item $\forall {\textbf{v}}: P(u(\textbf{X}) > u({\textbf{v}})) \geq P( u(\textbf{Y}) > u({\textbf{v}}))$ 
    \item $\exists \,\textbf{v} : P( u(\textbf{X}) > u({\textbf{v}})) > P( u(\textbf{Y}) > u({\textbf{v}}))$
\end{enumerate}
From Definition \ref{definition:multi_objective_fsd_esr}, if \textbf{X} $\succeq_{FSD}$ \textbf{Y} then the following is true:
\[
\forall u :  \forall {\textbf{v}}: P(u(\textbf{X}) > u({\textbf{v}})) \geq P( u(\textbf{Y}) > u({\textbf{v}})) 
\]
If \textbf{X} $\succeq_{FSD}$ \textbf{Y}, then, from Theorem \ref{theorem:multivariate_FSD}, the following is true:
\[
\mathbb{E}(u(\textbf{X})) \geq \mathbb{E}(u(\textbf{Y}))
\]
If condition $1$ is satisfied, the expected utility of $\textbf{X}$ is at least equal to the expected utility of $\textbf{Y}$, then:
\[
\mathbb{E}(u(\textbf{X})) = \int_{-\infty}^{\infty} \int_{-\infty}^{\infty} u(\textbf{z})f_{\textbf{X}}(t, z) \, dt \, dz
\]
\[
\mathbb{E}(u(\textbf{Y})) = \int_{-\infty}^{\infty} \int_{-\infty}^{\infty} u(\textbf{z})f_{\textbf{Y}}(t, z) \, dt \, dz
\]
In order to satisfy condition 2, some limits must exist to give the following,
\[
\int_{a}^{b}  \int_{c}^{d} u(t, z)f_{\textbf{X}}(t, z) \, dt \, dz > \int_{a}^{b}  \int_{c}^{d} u(t, z) f_{\textbf{Y}}(t, z) \, dt \, dz
\]
The minimum requirement to satisfy condition 1 is:
\[
\int_{-\infty}^{\infty}  \int_{-\infty}^{\infty} u(t, z)f_{\textbf{X}}(t, z) \, dt \, dz = \int_{-\infty}^{\infty}  \int_{-\infty}^{\infty} u(t, z)f_{\textbf{Y}}(t, z) \, dt \, dz
\]
If condition 1 is satisfied, to satisfy condition 2 some limits must exist:
\[
\int_{a}^{b}  \int_{c}^{d} u(t, z)f_{\textbf{X}}(t, z) \, dt \, dz > \int_{a}^{b}  \int_{c}^{d} u(t, z)f_{\textbf{Y}}(t, z) \, dt \, dz.
\]
Therefore,
\[
\int_{-\infty}^{a}  \int_{-\infty}^{c} u(t, z)f_{\textbf{X}}(t, z) \, dt \, dz \, + \int_{a}^{b}  \int_{c}^{d} u(t, z)f_{\textbf{X}}(t, z) \, dt \,dz \, +
\]
\[
\int_{b}^{\infty}  \int_{d}^{\infty} u(t, z)f_{\textbf{X}}(t, z) \,dt \, dz > \int_{-\infty}^{a}  \int_{-\infty}^{c} u(t, z)f_{\textbf{Y}}(t, z) \, dt \, dz \, +
\]
\[
\int_{a}^{b}  \int_{c}^{d} u(t, z)f_{\textbf{Y}}(t, z) \, dt \, dz \, + \int_{b}^{\infty}  \int_{d}^{\infty} u(t, z)f_{\textbf{Y}}(t, z) \, dt \, dz.
\]
Finally,
\[
\int_{-\infty}^{\infty}  \int_{-\infty}^{\infty} u(t, z)f_{\textbf{X}}(t, z) \, dt \, dz > \int_{-\infty}^{\infty}  \int_{-\infty}^{\infty} u(t, z)f_{\textbf{Y}}(t, z) \, dt \, dz
\]
if \textbf{X} $\succ_{ESR}$ \textbf{Y}, then,
\[
\mathbb{E}(u(\textbf{X})) > \mathbb{E}(u(\textbf{Y})).
\]
\end{proof}
In the ESR dominance criterion defined in Definition \ref{definition:esr_dominance}, the utility of different vectors is compared. However, it is not possible to calculate the utility of a vector when the utility function is unknown. In this case, Pareto dominance \cite{pareto1896dominance} can be used instead to determine the relative utility of the vectors being compared.
\begin{definition} $\textbf{A}$ Pareto dominates ($\succ_{p}$) $\textbf{B}$ if the following is true:
\label{defition:strict_pareto_dominace}
\begin{equation}
    \mathbf{A} \succ_{p} \mathbf{B} \Leftrightarrow (\forall i: \mathbf{A}_{i} \ge \mathbf{B}_{i}) \land (\exists i: \mathbf{A}_i > \mathbf{B}_i).
\end{equation}
\end{definition}
For monotonically increasing utility functions, if the value of an element of the vector increases, then the scalar utility of the vector also increases. Therefore, using Definition \ref{defition:strict_pareto_dominace}, if vector $\mathbf{A}$ Pareto dominates vector $\mathbf{B}$, for a monotonically increasing utility function, $\mathbf{A}$ has a higher utility than $\mathbf{B}$. To make ESR comparisons between return distributions, Pareto dominance can be used.
\begin{definition}
\label{def:esr_pdf}
For random vectors \textbf{X} and \textbf{Y}, \textbf{X} $\succ_{ESR}$ \textbf{Y} for all monotonically increasing utility functions if, and only if, the following is true:
\[
\textbf{X} \succ_{ESR} \textbf{Y} \Leftrightarrow
\]
\[
  \forall {\bf v}: P( {\bf X} >_P {\bf v}) \geq P( {\bf Y} >_P {\bf v})
   \wedge \exists {\bf v} : P( {\bf X} >_P {\bf v}) > P( {\bf Y} >_P {\bf v}).
\]
\end{definition}
It is also possible to calculate ESR dominance by comparing the CDFs of random vectors. Using the CDF guarantees a higher expected utility. Using the CDF we compare the cumulative probabilities for a given vector, where a lower cumulative probability is preferred. ESR dominance with the CDF does not require any information about the utility function of a user and therefore can be used in the unknown utility function scenario.

\begin{definition}
\label{def:esr_cdf}
For random vectors \textbf{X} and \textbf{Y}, \textbf{X} $\succ_{ESR}$ \textbf{Y} for all monotonically increasing utility functions if, and only if, the following is true:

\[
\textbf{X} \succ_{ESR} \textbf{Y} \Leftrightarrow
\]
\[
  \forall {\bf v}: F_{\textbf{X}}({\bf v}) \leq F_{\textbf{Y}}({\bf v})
   \wedge \exists {\bf v} : F_{\textbf{X}}({\bf v}) < F_{\textbf{Y}}({\bf v}).
\]
\end{definition}
Therefore, we can use either Definition \ref{def:esr_pdf} or Definition \ref{def:esr_cdf} to calculate ESR dominance to give a partial ordering over policies.
 
\begin{definition}
\label{defn:esr_policies}
For return distributions $\textbf{Z}^{\pi}$ and $\textbf{Z}^{\pi'}$ for policies $\pi$ and $\pi'$, $\pi$ is preferred over $\pi'$ by all decision makers with a monotonically increasing utility function if, and only if, the following is true:
\[
\textbf{Z}^{\pi} \succ_{ESR} \textbf{Z}^{\pi'} 
\]
\end{definition}
Using ESR dominance, it is possible to define a set of optimal policies, known as the \emph{ESR set}.
\begin{definition}
The \emph{ESR set}, $ESR(\Pi)$, is a sub-set of all policies where each policy in the \emph{ESR set} is ESR dominant,
\begin{equation}
    ESR(\Pi) = \{ 
    \pi \in \Pi\  |\ \nexists \pi'\in\Pi : \mathbf{Z}^{\pi'} \succ_{ESR} \mathbf{Z}^\pi
    \}.
\end{equation}
\end{definition}
The \emph{ESR set} is a set of non-dominated policies, where each policy in the \emph{ESR set} is ESR dominant. The \emph{ESR set} can be considered a coverage set, given no excess policies exist in the \emph{ESR set}. It is viable for a multi-policy MORL method to use ESR dominance to construct the \emph{ESR set}.

\section{Multi-Objective Tabular Distributional Reinforcement Learning}
\label{sec:modrl}
Traditionally in the MORL literature, multi-objective methods learn a set of optimal solutions when the utility function of a user is unknown or hard to specify \cite{roijers2013survey,hayes2021practical}. The current MORL literature focuses only on methods which learn the optimal set of policies under the SER criterion \cite{vanmoffaert2014multiobjective,wang2012multi}. As already highlighted, the ESR criterion has largely been ignored by the MORL community, with a few exceptions \cite{roijers2018multi,hayes2021dmcts,vamplew2021}. In Section \ref{sec:modrl} we address this research gap and we present a novel multi-objective tabular distributional reinforcement learning (MOTDRL) algorithm that learns an optimal set of policies for the ESR criterion, also known as the \emph{ESR set}, for multi-objective multi-armed bandit (MOMAB) problems.

MOTDRL learns the return distribution for a policy by sampling each available arm in a MOMAB setting and maintains a multivariate distribution over the returns received. Given MOTDRL only considers MOMAB problem domains, MOTDRL maintains a distribution per arm and updates the distribution after each timestep with the return vector received from executing the sampled arm. When optimising under the ESR criterion it is critical that a MORL method learns the underlying distribution over the returns. Other distributional MORL methods, such as bootstrap Thompson sampling \cite{hayes2021dmcts}, cannot be used to learn a set of optimal policies under the ESR criterion when the utility function is unknown. Such methods learn a distribution over the mean returns. In scenarios where the utility function is unknown or unavailable, such methods would invalidate the ESR criterion as a distribution over mean return vectors would be computed. Given a distribution must be used when learning the \emph{ESR set}, new distributional MORL methods must be formulated to learn the underlying return distributions.

MOTDRL can learn the underlying return distribution for an arm by maintaining a tabular representation of the underlying multivariate distribution. To maintain a tabular representation of a multivariate distribution we initialise a $Z$-table for each arm where the $Z$-table has an axis per objective. The $Z$-table maintains a count of the number of times a return vector is received for a given arm. The size of each $Z$-table is initialised using the parameters $\textbf{R}_{min}$ and $\textbf{R}_{max}$ which are the minimum and maximum returns obtainable for any of the objectives in the given environment. Therefore, each axis in the $Z$-table will use $\textbf{R}_{min}$ and $\textbf{R}_{max}$ to define the length of the axis, where each index value of the $Z$-table is initialised to $0$. Using $\textbf{R}_{min}$ and $\textbf{R}_{max}$ as initialisation parameters, a $Z$-table can be constructed which contains an index for all possible return vectors in a given problem domain. Figure \ref{table:z_table} visualises an initialised $Z$-table for a multi-objective problem with two objectives where $\textbf{R}_{min}$ = 0 and $\textbf{R}_{max}$ = 5.   

\begin{figure}[h]
    \centering
    \begin{tabular}{| c | c | c | c | c | c | c |}
    \hline
        Z & $x_{2} = 0$ & $x_{2} = 1$ & $x_{2} = 2$ & $x_{2} = 3$ & $x_{2} = 4$ & $x_{2} = 5$ \\
    \hline
         $x_{1} = 0$ & 0 & 0 & 0 & 0 & 0 & 0 \\
     \hline
     $x_{1} = 1$ & 0 & 0 & 0 & 0 & 0 & 0 \\
     \hline
        $x_{1} =2$ & 0 & 0 & 0 & 0 & 0 & 0 \\
        \hline
        $x_{1} =3$ & 0 & 0 & 0 & 0 & 0 & 0 \\
        \hline
        $x_{1} =4$ & 0 & 0 & 0 & 0 & 0 & 0 \\
        \hline
        $x_{1} =5$ & 0 & 0 & 0 & 0 & 0 & 0 \\
    \hline
    \end{tabular}
    \caption{An illustration of an initialised $Z$-table for a problem domain with two objectives, $x_{1}$ and $x_{2}$, with each index value set to 0.}
    \label{table:z_table}
\end{figure}

\begin{algorithm}[h]
\textbf{Input} - arm, $i$ \\
\textbf{Require} - $Z$-table for arm, $i$, $Z_{i}$ \\
Pull arm, $i$, and observe return, $\textbf{R}$. \\
$Z_{i}($\textbf{R}$)$ = $Z_{i}($\textbf{R}$)$ + 1 \\
$N_{i}$ = $N_{i}$ + 1 \\ 
\textbf{return} $Z$-table, $Z_{i}$.
\caption{$Z$-table Update}
\label{alg:distribution_update}
\end{algorithm}

Each $Z$-table can be used to calculate the return distribution of an arm, which can be considered as a policy $\pi$, $\textbf{Z}^{\pi}$ (see Section \ref{sec:esr}). At each timestep, $t$, the returns, $\textbf{R}$, received from pulling arm, $i$, are used to update the $Z$-table. The $Z$-table is used to maintain a count of the number of times the return, $\textbf{R}$, is received. In MOMAB problem domains, the returns received from the execution of an arm represent the full returns of the execution of a policy. To update the $Z$-table, the value at the index corresponding to the return $\textbf{R}$ is incremented by one. To correctly calculate the probability of receiving return $\textbf{R}$ when pulling arm $i$, a counter, $N_{i}$, which represents the number of times arm $i$ has been pulled, must be maintained. Each time arm $i$ is pulled, the counter $N_{i}$ is incremented by one. Algorithm \ref{alg:distribution_update} outlines how the $Z$-table for each arm is updated.

MOTDRL is a multi-policy algorithm that can learn the ESR set using ESR dominance. Using ESR dominance a partial ordering over policies can be determined when the utility function of a user is unknown. Algorithm \ref{alg:motdrl} outlines how MOTDRL learns the ESR set when the utility function of a user in unknown in a MOMAB problem domain. 
In Algorithm \ref{alg:motdrl} $\mathcal{A}$ is defined as a set of available arms, the ESR set is defined as $E$, $D$ is the number of objectives, $n$ is the total number of pulls across all arms, $N_{l}$ and $N_{i}$ are the number of pulls of arms $j$ and $i$, and $|E^{*}|$ is the cardinality of the ESR set, which is known a priori. When learning, the MOTDRL algorithm has priori knowledge of $\mathcal{A}$, $\textbf{R}_{max}$ and $\textbf{R}_{min}$. The agent must has knowledge of $\textbf{R}_{max}$ and $\textbf{R}_{min}$ so the Z-table can be correctly initialised and the agent must know the number of arms in $\mathcal{A}$ for action selection.

\begin{algorithm}
Pull each arm $i$ in $\mathcal{A}$, $\beta$ times \\
Z-table Update(i) $\forall$ i $\in$ $\mathcal{A}$ \\
\Repeat{stopping condition is met}{Find $E$ such that $\forall$ i $\in$ $E$, $\forall$ $j$ \\ $\textbf{Z}^{j} + \sqrt{\frac{2ln(n\sqrt[4]{D|E^{*}|})}{N_{j}}} \nsucc_{ESR} \textbf{Z}^{i}  + \sqrt{\frac{2ln(n\sqrt[4]{D|E^{*}|})}{N_{i}}}$  \label{alg:UpdateDistribution:line:esr}\\Select arm a uniform randomly from $E$\\Z-table Update(i)}
\caption{Multi-Objective Tabular Distributional Reinforcement Learning}
\label{alg:motdrl}
\end{algorithm}

On initialisation each arm is pulled $\beta$ times. The hyperparameter $\beta$ is selected to ensure each arm is pulled sufficiently to build an initial distribution. For optimal performance $\beta$ is set to greater than $1$. For $\beta$ greater than $1$, MOTDRL can build a sufficient initial distribution and can then efficiently explore each arm with the UCB1 statistic. At each timestep, the return distribution of the policies associated with the execution of each arm is calculated. The ESR set, $E$, is then calculated from the resulting return distributions. Therefore, for  all  the  non-optimal  arms $l \not \in E$, there exists an ESR dominant arm $i \in E$ that ESR dominates the arm $l$.

To calculate ESR dominance required in Algorithm \ref{alg:motdrl} at Line \ref{alg:UpdateDistribution:line:esr}, it is critical to compute both the PDF and CDF of the underling return distribution of a policy. The PDF can be calculated by computing the probability of receiving individual returns. Combining the $Z$-table and $N$ for an arm, $i$, it is possible to compute the probability of receiving each return in a given problem domain, since the following is true:
\begin{equation}
    f_{\textbf{X}}(x_{1}, x_{2}, ..., x_{n}) = P(\textbf{X} = x_{1}, \textbf{X} = x_{2}, ..., \textbf{X} = x_{n}) = \frac{Z_{i}(x_{1}, x_{2}, ..., x_{n})}{N_{i}}
    \label{eqn:pdf_z_table}
\end{equation}
Once the PDF has been computed using Equation \ref{eqn:pdf_z_table}, it is possible to compute the CDF. Since the following is true:
\begin{equation}
\begin{split}
    F_{\textbf{X}}(x_{1}, x_{2}, ..., x_{n}) = & P(\textbf{X} \leq x_{1}, \textbf{X} \leq x_{2}, ..., \textbf{X} \leq x_{n}) \\ = &\sum_{x_{a} \leq x_{1}} \sum_{x_{b} \leq x_{2}}... \sum_{x_{k} \leq x_{n}} P(\textbf{X} = x_{a}, \textbf{X} = x_{b}, ..., \textbf{X} = x_{k}) \\ = &\sum_{x_{a} \leq x_{1}} \sum_{x_{b} \leq x_{2}}... \sum_{x_{k} \leq x_{n}} \frac{Z_{i}(x_{a}, x_{b}, ..., x_{k})}{N_{i}}
\end{split}
\label{eqn:cdf_z_table}
\end{equation}

Using the PDF and the CDF of a return distribution it is possible to calculate ESR dominance using Definition \ref{def:esr_pdf} or Definition \ref{def:esr_cdf}. Both methods can be used to calculate ESR dominance.

To efficiently explore all available arms, MOTDRL uses the UCB1 statistic presented by Drugan et al. \cite{drugan2013momab}. MOTDRL uses UCB1 to transform the PDF of the underlying return distribution. MOTDRL tranforms the PDF by adding the UCB1 statistic, computed at Line \ref{alg:UpdateDistribution:line:esr} in Algorithm \ref{alg:motdrl}, to the PDF. By summing the UCB1 statistic and the PDF, the PDF is shifted relative to the value of the computed UCB1 statistic. The CDF can then calculated based on the transformed PDF and ESR dominance can then be computed. 

Transforming the PDF using the UCB1 statistic ensures that there is sufficient exploration of all available arms during experimentation. However, as the number of pulls of a given arm increases the UCB1 statistic decreases, which decreases exploration. Over time the UCB1 statistic's effect on the PDF and CDF becomes negligible. At such a point, MOTDRL can exploit the return distributions learned during exploration and compute the ESR set.

Given MOTDRL is a multi-policy algorithm, MOTDRL can be used in the unknown utility function scenario (see Figure \ref{fig:unknown_utility_function_scenario}). During the learning phase MOTDRL learns the ESR set by utilising the steps in Algorithm \ref{alg:motdrl}. In Section \ref{sec:experiments} we deploy MOTDRL in two multi-objective multi-armed bandit settings to show MOTDRL can learn the ESR set. It is important to note that the experiments presented only consider the learning phase.
\section{Experiments}
\label{sec:experiments}
In order to evaluate the MOTDRL algorithm, we evaluate MOTDRL in multiple settings \footnote{It is important to note for each experiment the results of the learning phase is presented where and a set of optimal policies is computed. The selection phase and execution phase are not included in the evaluation of MOTDRL.}. Before experimentation we define a metric that can be used to evaluate the performance of multi-policy ESR methods. We then evaluate MOTDRL in a multi-objective multi-armed bandit setting. Finally, we define a new multi-objective multi-armed bandit problem domain known as the Vaccine Recommender System (VRS) environment and evaluate MOTDRL using the VRS environment.

\subsection{Evaluation Metrics}
The standard metrics for MORL \cite{vamplew2011evaluation_methods,zintgraf2015quality_ser,yang2019} are not suitable to evaluate a multi-policy method under the ESR criterion since they are designed to specifically evaluate SER methods. To evaluate MORL algorithms under the ESR criterion, we adapt the coverage ratio metric used by Yang et al. \cite{yang2019} for the ESR criterion. The coverage ratio evaluates the agent’s ability to recover optimal solutions in the ESR set ($E$). If $\mathcal{F} \subseteq {R}^m$ is the set of solutions found by the agent, we define the following:
\begin{equation}
\mathcal{F} \cap_{\epsilon} E := \{ Z^{\pi} \in \mathcal{F} \, | \, \exists Z^{\pi'} \in E \, s.t \, \sup\limits_{\textbf{x}} |F_{Z^{\pi}}(\textbf{x}) - F_{Z^{\pi'}}(\textbf{x}) \, | \leq \epsilon \},
\label{eqn:coverage_ratio}
\end{equation}
where $\textbf{x} = [x_{1}, x_{2}, ..., x_{D}]$ and $D$ is equal to the number of objectives. 
Equation \ref{eqn:coverage_ratio} uses the Kolmogorov–Smirnov statistic \cite{darling1957kstest} (Equation \ref{eqn:ks_statistic}), where $ \sup\limits_{\textbf{x}}$ is the supremum of the set of distances. The Kolmogorov–Smirnov statistic takes the largest absolute difference between the two CDFs across all $\textbf{x}$ values,
\begin{equation}
\sup\limits_{\textbf{x}} |F_{Z^{\pi}}(\textbf{x}) - F_{Z^{\pi'}}(\textbf{x})|.
\label{eqn:ks_statistic}
\end{equation}
The Kolmogorov–Smirnov statistic returns a minimum value of $0$ and a maximum value of $1$. If two CDFs are equal then the Kolmogorov–Smirnov statistic will return a value of $0$.

The coverage ratio is then defined as:
\begin{equation}
    F_{1} = 2 \cdot \frac{precision \cdot recall}{precision + recall},
\end{equation}
where precision $= | \mathcal{F} \cap_{\epsilon} E| / |\mathcal{F}|$ indicating the fraction of optimal solutions among the retrieved solutions, and the recall $= | \mathcal{F} \cap_{\epsilon} E| / |E|$ indicating the fraction of optimal instances that have been retrieved over the total amount of optimal solutions \cite{yang2019}.

\subsection{Multi-Objective Multi-Armed Bandit Environment}
\label{sec:experiments_momab}
In Section \ref{sec:experiments_momab} we evaluate MOTDRL in a MOMAB setting. To evaluate MOTDRL, we consider a bi-objective MOMAB with five arms. Table \ref{table:momab_distributions} outlines the number of possible outcomes obtainable when selecting a given arm and the corresponding probabilities. Table \ref{table:momab_distributions} is unknown to the agent, and the agent aims to learn each distribution per arm and prune the ESR dominated arms from consideration. In the MOMAB setting the ESR set is known a priori where the return distributions for $arm_1$ and $arm_5$ are ESR dominant and therefore the ESR set only contains $arm_1$ and $arm_5$. 

To evaluate MOTDRL in a MOMAB environment we set $R_{min} = 0$, $R_{max} = 10$, $D$ = 2, $\beta$ = 5 and $|E^{*}|$ = 2. To compute the coverage ratio we set $\epsilon = 0.01$. All experiments in this setting are averaged over 10 runs. 
\begin{table}[H]

    \centering
    \begin{tabular}{| c | c |}
    \multicolumn{2}{c}{$arm_1$} \\
    \hline
         P(Arm 1 = $\textbf{R}$) & \textbf{R} \\
    \hline
         0.4 & (0, 1) \\
    \hline 
          0.6 & (5, 4) \\
    \hline
    \end{tabular}
    \begin{tabular}{| c | c |}
    \multicolumn{2}{c}{$arm_2$} \\
    \hline
         P(Arm 2 = $\textbf{R}$) & \textbf{R} \\
    \hline
         0.85 & (1, 0) \\
    \hline 
          0.15 & (3, 2) \\

    \hline
    \end{tabular}
    \begin{tabular}{| c | c |}
    \multicolumn{2}{c}{$arm_3$} \\
    \hline
         P(Arm 3= $\textbf{R}$) & \textbf{R} \\
    \hline
         0.75 & (2, 0) \\
    \hline 
          0.25 & (4, 2) \\
    \hline
    \end{tabular}
    \begin{tabular}{| c | c |}
    \multicolumn{2}{c}{$arm_4$} \\
    \hline
         P(Arm 4 = $\textbf{R}$) & \textbf{R} \\
    \hline
         0.8 & (0, 1) \\
    \hline 
          0.2 & (1, 2) \\
    \hline
    \end{tabular}
    \begin{tabular}{| c | c |}
    \multicolumn{2}{c}{$arm_5$} \\
    \hline
         P(Arm 5 = $\textbf{R}$) & \textbf{R} \\
    \hline 
          0.7 & (2, 0) \\
    \hline
         0.3 & (4, 5) \\
    \hline
    \end{tabular}
    \caption{A MOMAB with 5 arms where selecting an arm has two outcomes and two objectives.}
    \label{table:momab_distributions}
\end{table}

\begin{figure}[h]
    \centering
    \includegraphics[height = 7cm, width=9cm]{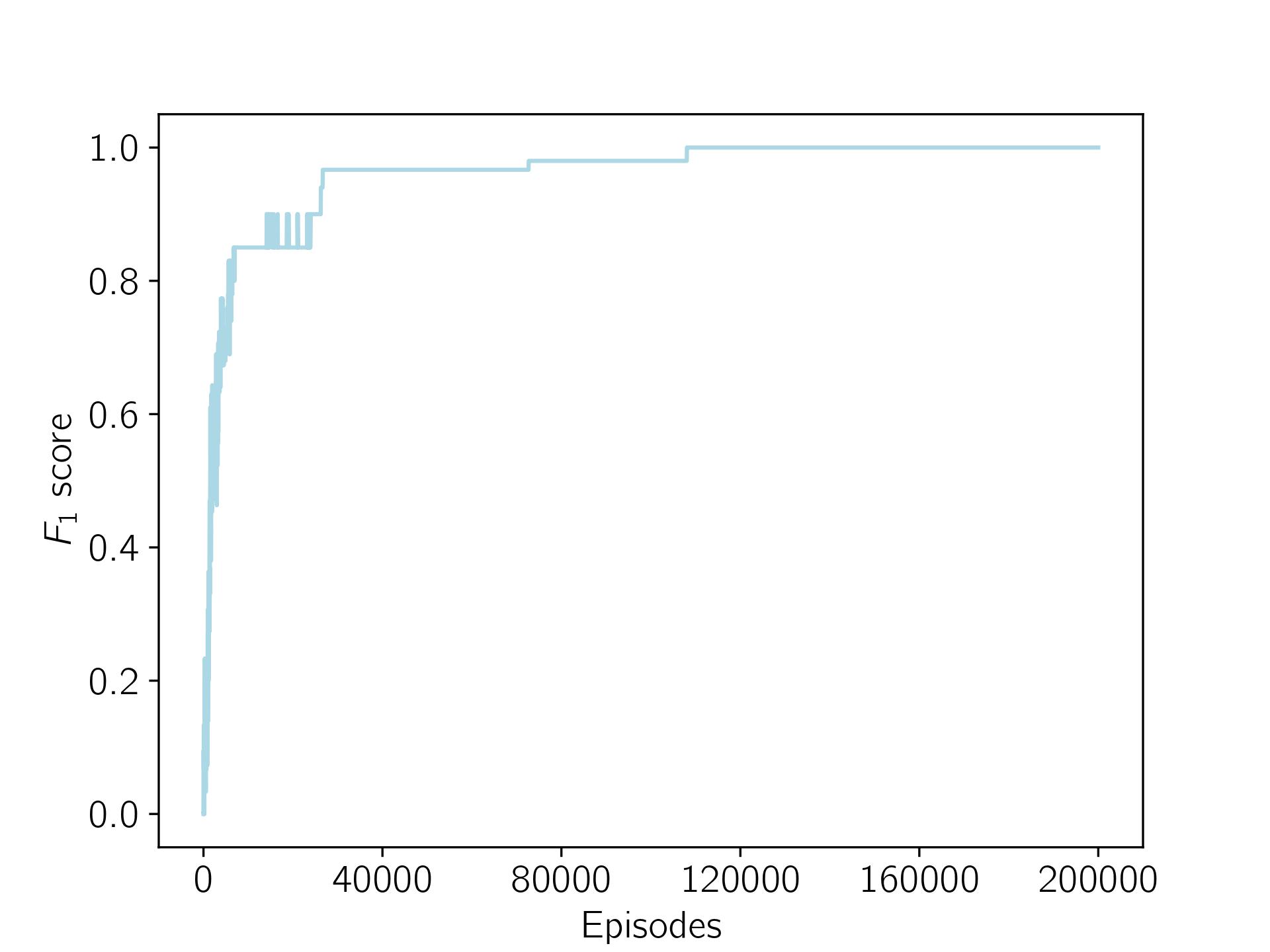}
    \caption{Results from the MOMAB environment. MOTDRL is able to learn the ESR set as MOTDRL converges to the optimal coverage ratio since the $F_1$ score reaches the maximum possible value of 1.} 
    \label{fig:momab_f1}
\end{figure}
MOTDRL is able to learn the underlying return distributions for each arm in the MOMAB setting. Using the return distributions for each arm, MOTDRL can learn the ESR set in the MOMAB environment. In Figure \ref{fig:momab_f1}, we plot the coverage ratio as the $F_{1}$ score. MOTDRL converges to the optimal $F_1$ score of $1$. MOTDRL converges to the optimal $F_1$ score after $100,000$ episodes. An optimal $F_1$ score can only be achieved when all policies in the ESR set have been learned by the agent.

MOTDRL computes the ESR set for the MOMAB environment during the learning phase. The learned ESR set contains two arms; $arm_1$ and $arm_5$. Both $arm_1$ and $arm_5$ are ESR dominant therefore any user with a monotonically increasing utility function would prefer $arm_1$ or $arm_5$ over all other available arms in the MOMAB problem. MOTDRL will return the ESR set to the user during the selection phase. In practice, a user will select a policy form the ESR set which best reflects their preferences and the selected policy will be executed.

\begin{figure}
    \centering
    \begin{subfigure}{.5\textwidth}
         \includegraphics[width=6.25cm]{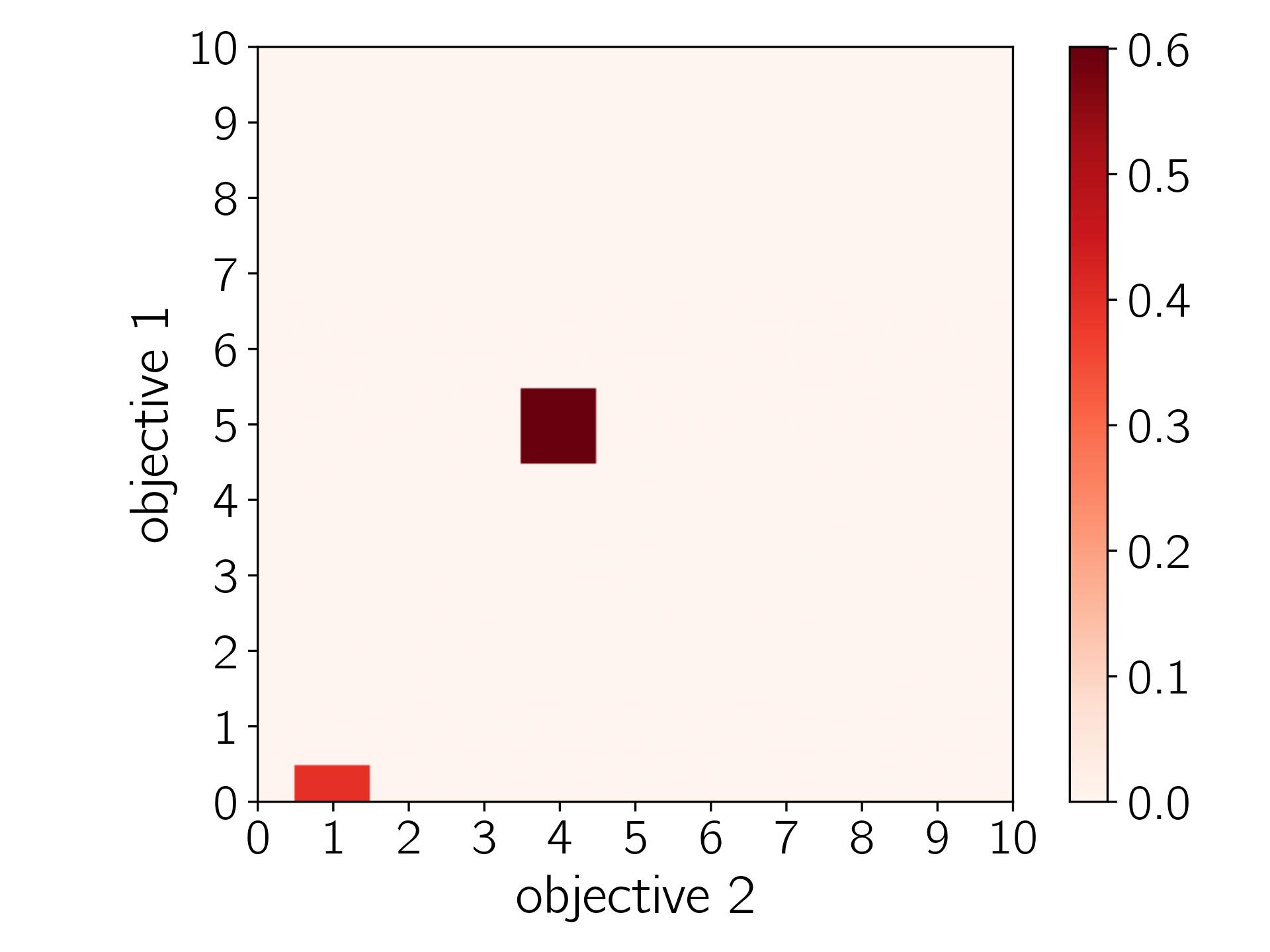}
    \end{subfigure}%
    \begin{subfigure}{.5\textwidth}
    \includegraphics[width=6.25cm]{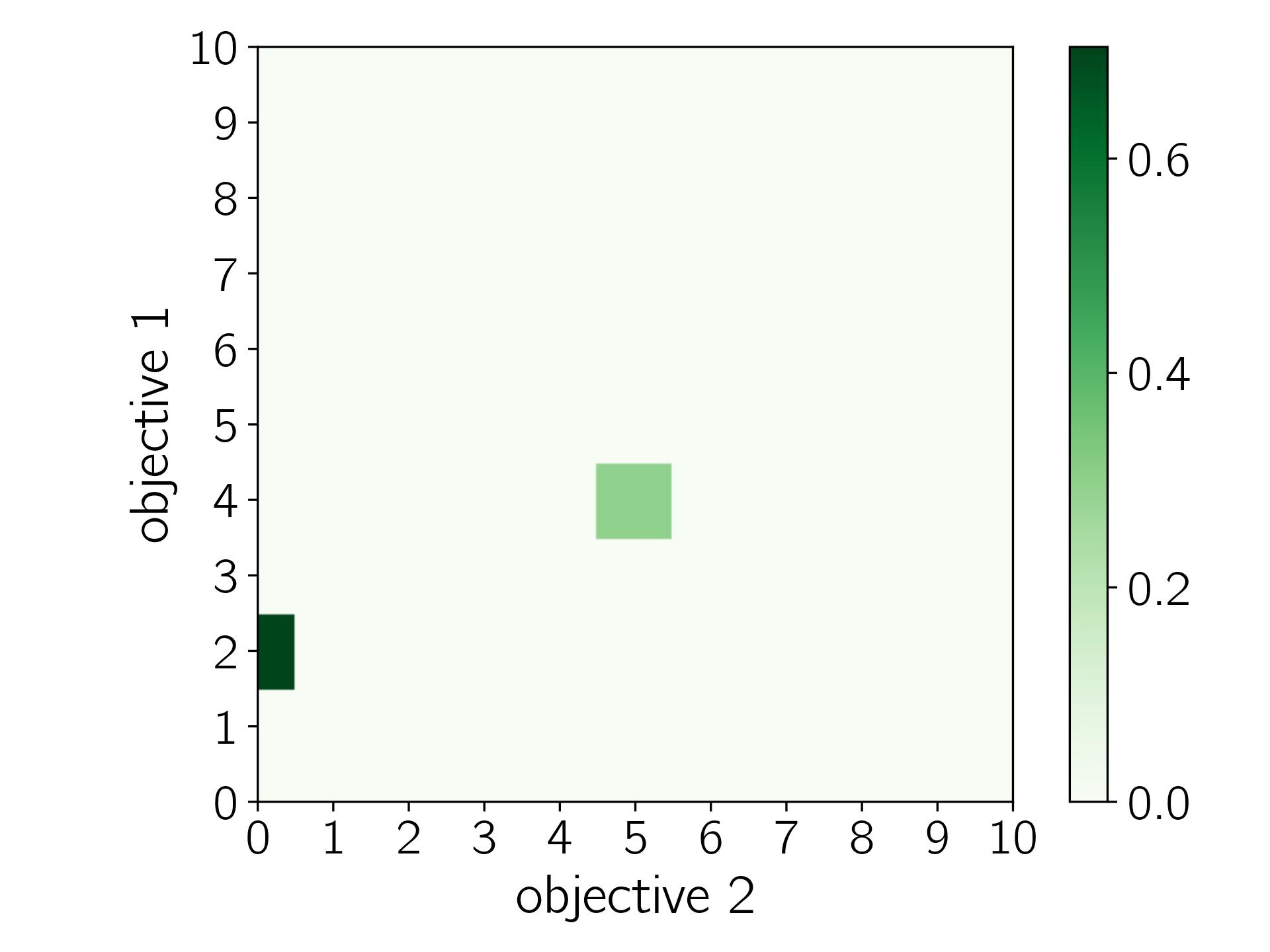}
    \end{subfigure}
    \caption{Heatmaps for each return distribution in the ESR set learned by MOTDRL in the MOMAB environment. The left heatmap describes the return distribution for $arm_1$ learned by MOTDRL and the right heatmap describes the return distribution for $arm_5$ learned by MOTDRL.} 
    \label{fig:momab_heatmaps}
\end{figure}
Given ESR dominance is a new solution concept, we utilise Figure \ref{fig:momab_heatmaps}, Figure \ref{fig:momab_cdfs} and Figure \ref{fig:multi_plots_momab} to give the reader some intuition about ESR dominance. Figure \ref{fig:momab_heatmaps} displays the return distributions in the ESR set learned by MOTDRL as heatmaps. Each heatmap in Figure \ref{fig:momab_heatmaps} corresponds to the probabilities highlighted for $arm_1$ (left) and $arm_5$ (right) in Table \ref{table:momab_distributions}. 

\begin{figure}
    \centering
    \begin{subfigure}{.5\textwidth}
         \includegraphics[width=6cm]{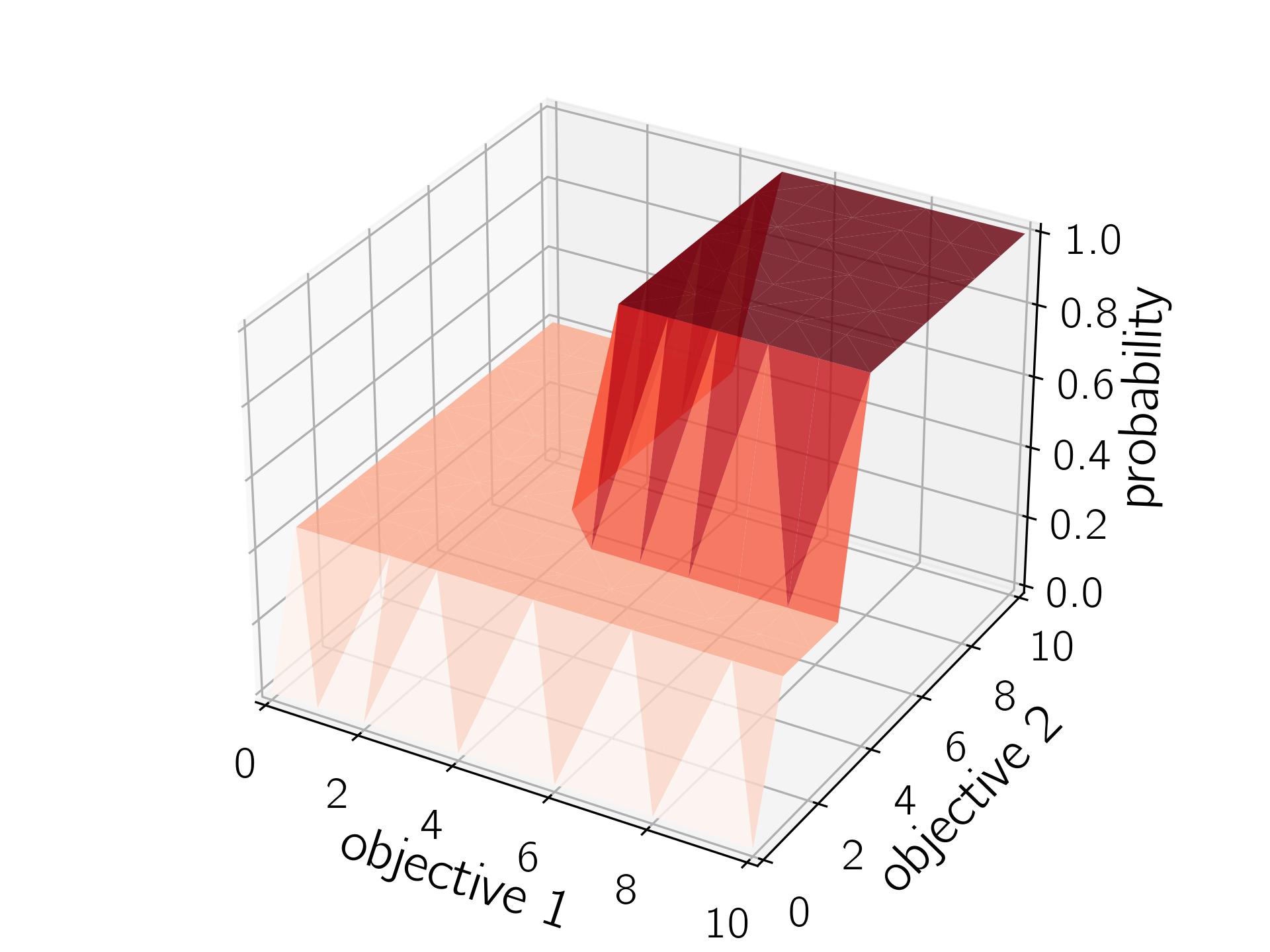}
    \end{subfigure}%
    \begin{subfigure}{.5\textwidth}
    \includegraphics[width=6cm]{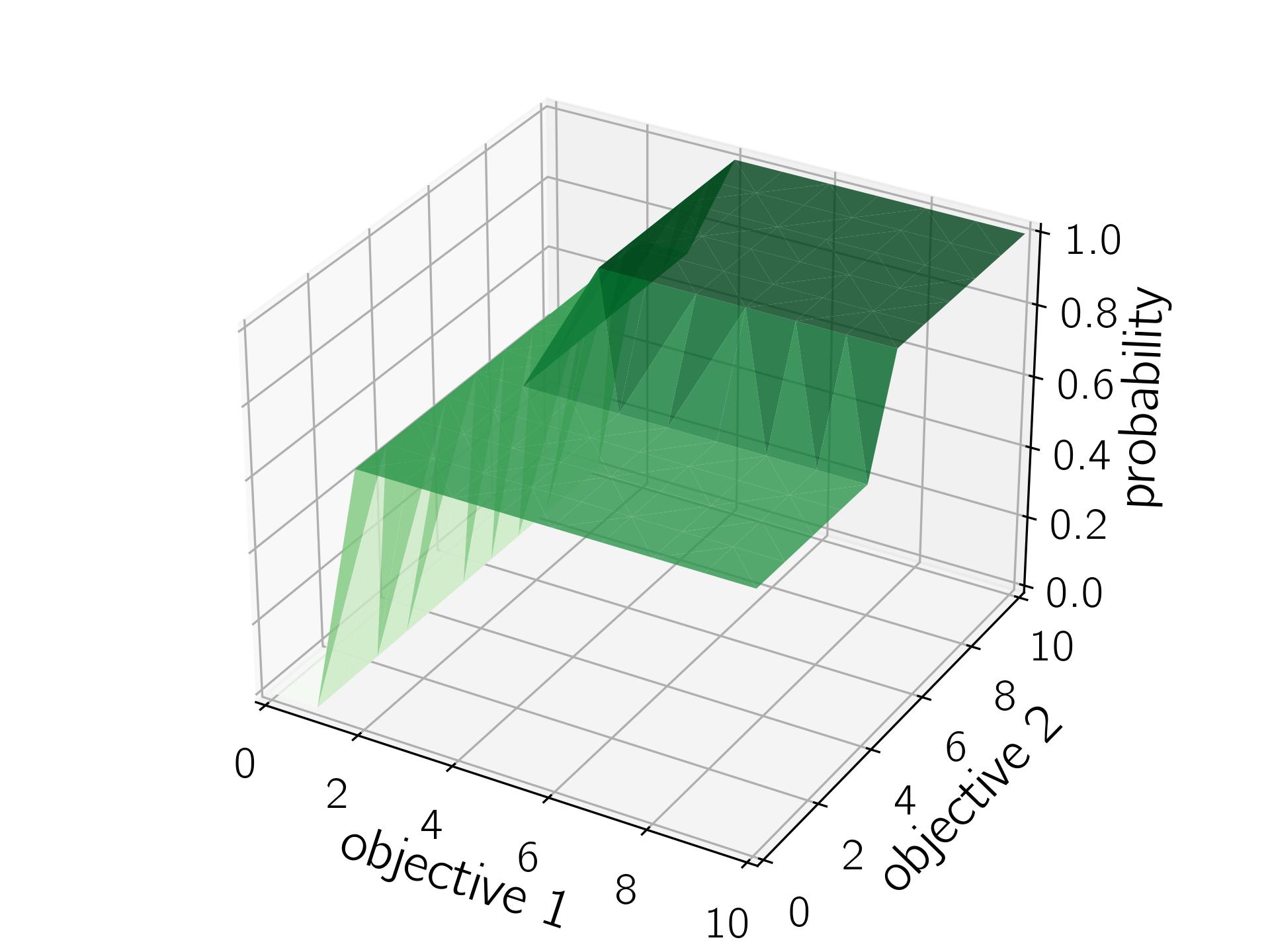}
    \end{subfigure}
    \caption{CDFs for each policy in the ESR set learned by MOTDRL in the MOMAB environment. The left figure describes the CDF for $arm_1$ learned by MOTDRL and the right figure describes the CDF for $arm_5$ learned by MOTDRL.} 
    \label{fig:momab_cdfs}
\end{figure}

Figure \ref{fig:momab_cdfs} displays the CDFs for each return distribution in the ESR set learned by MOTDRL. The CDF is used to calculate ESR dominance and the CDFs in Figure \ref{fig:momab_cdfs} correspond to the CDFs of $arm_1$ (left) and $arm_5$ (right) in Table \ref{table:momab_distributions}.

\begin{figure}
     \centering
    \includegraphics[width=6.25cm]{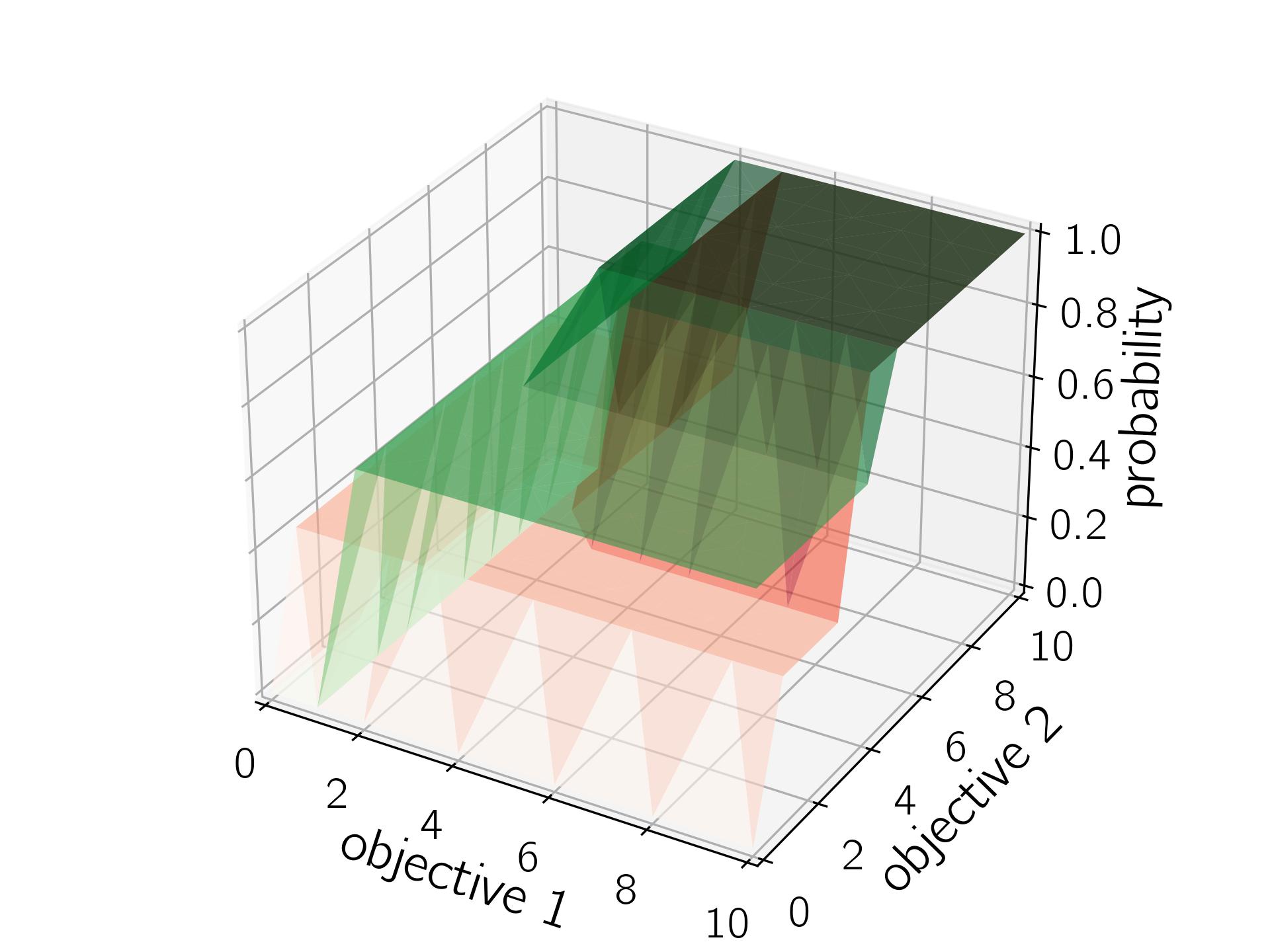}
    \caption{The CDFs for $arm_1$ and $arm_5$ intersect at multiple points. Therefore, as per Definition \ref{definition:esr_dominance}: $arm_1$ $\nsucc_{ESR}$ $arm_5$ and $arm_5$ $\nsucc_{ESR}$ $arm_1$.}
    \label{fig:multi_plots_momab}
\end{figure}

Figure \ref{fig:multi_plots_momab} describes how $arm_1$ $\nsucc_{ESR}$ $arm_5$ and $arm_5$ $\nsucc_{ESR}$ $arm_1$ given the CDFs for $arm_1$ and $arm_5$ intersect at multiple points (see Definition \ref{definition:esr_dominance}).

\begin{figure}
    \centering
    \begin{subfigure}{.5\textwidth}
         \includegraphics[width=6.05cm]{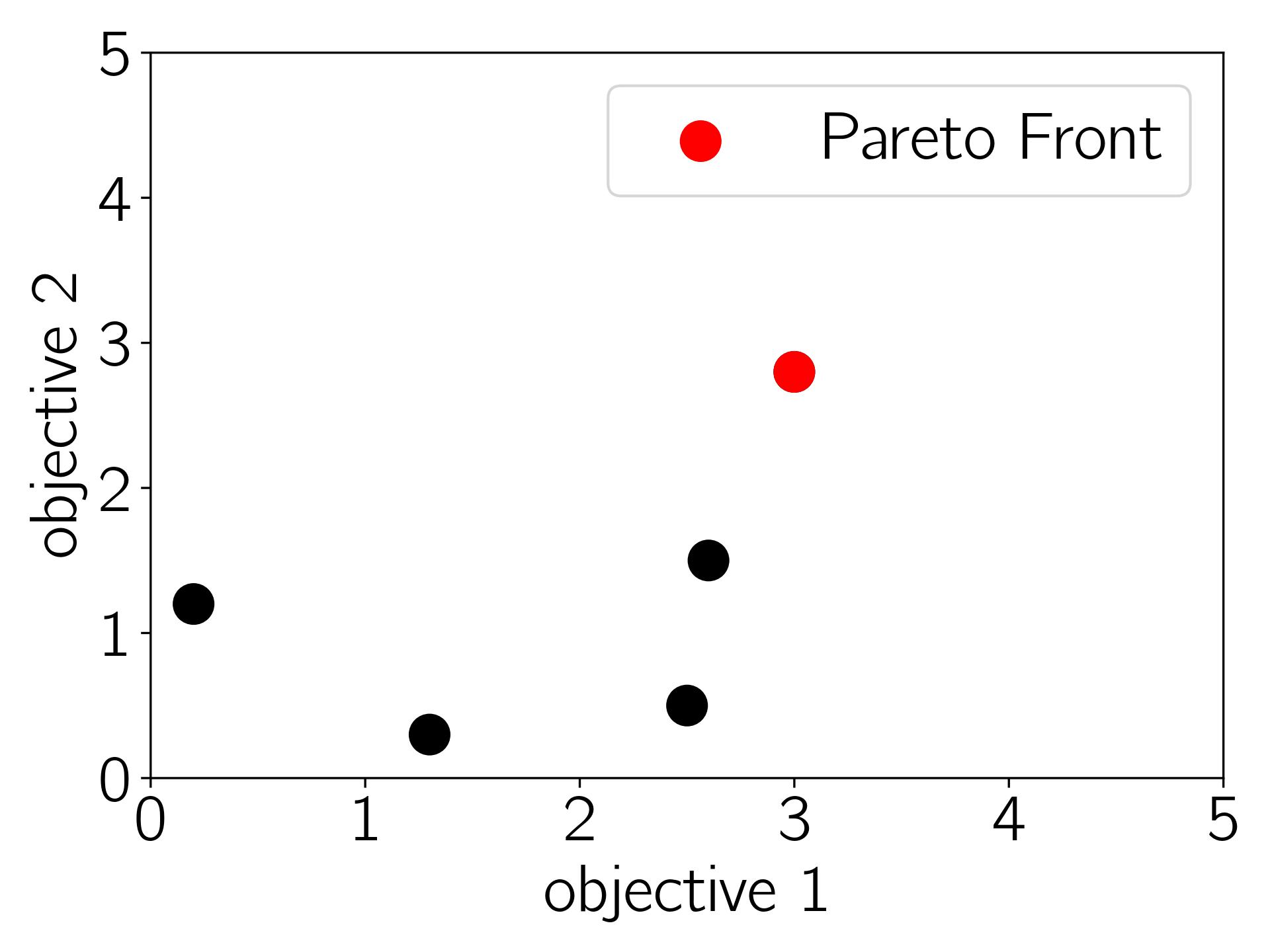}
    \end{subfigure}%
    \begin{subfigure}{.5\textwidth}
    \includegraphics[width=6.05cm]{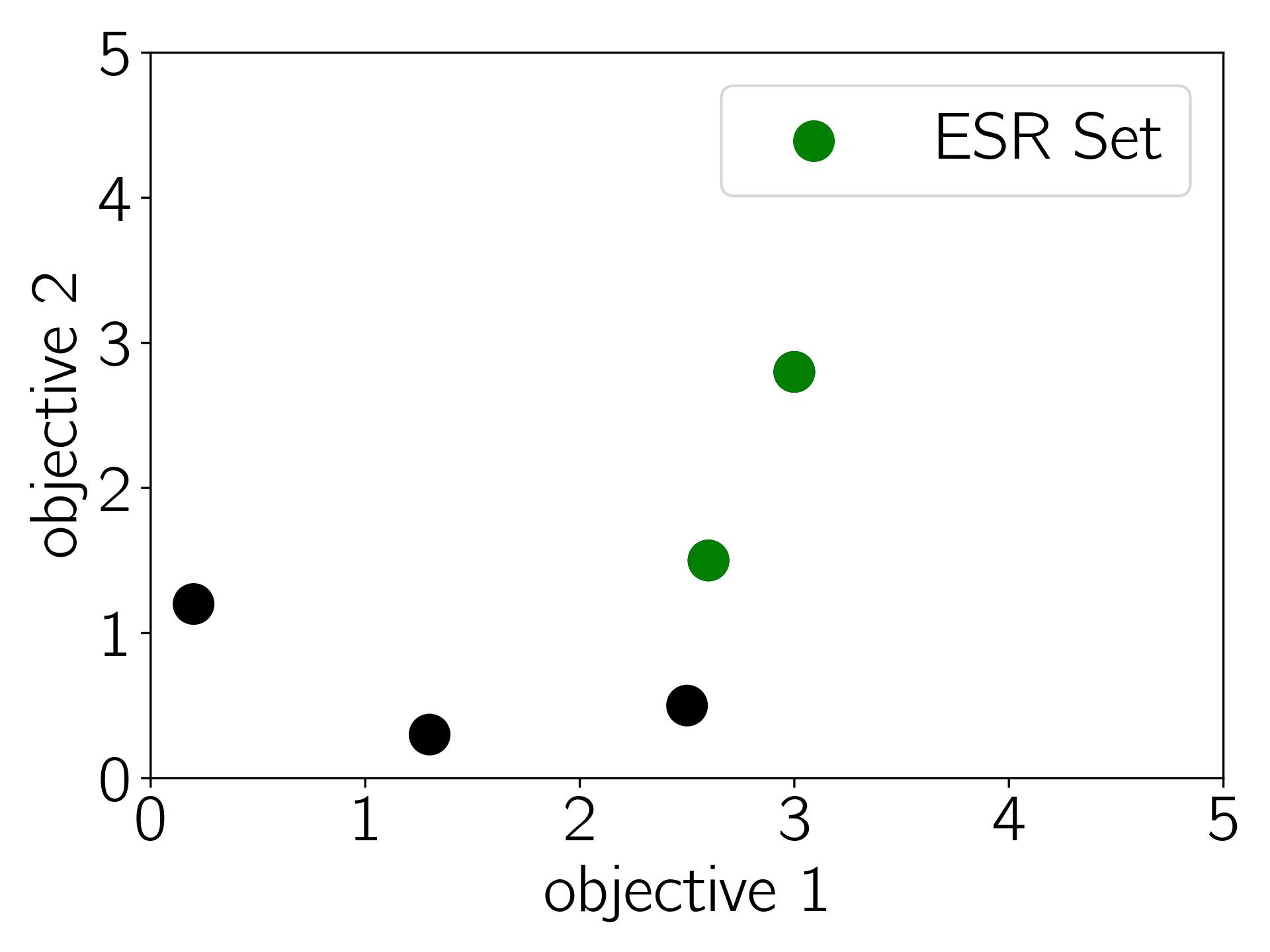}
    \end{subfigure}
    \caption{The policies on the Pareto front (left) are different from the expectations of the policies in the ESR set (right). In this case, a policy that is in the ESR set is not on the Pareto front. This figure illustrate why SER methods cannot be used to learn the ESR set.} 
    \label{fig:esr_pareto}
\end{figure}

Figure \ref{fig:esr_pareto} highlights why the choice of optimality criteria must be take into consideration for multi-objective decision making when the utility function of the user is unknown. A number of SER methods use Pareto dominance to determine a partial ordering over policies. The Pareto dominant policies, or Pareto front, are then returned to the user. To determine the Pareto front \cite{pareto1896dominance} the expectations of each arm in the MOMAB setting are calculated and the Pareto dominant policies are determined. In Figure \ref{fig:esr_pareto} the policies on the Pareto front (left) have been highlighted in red, all other policies are Pareto dominated. In the MOMAB environment outlined in Table \ref{table:momab_distributions}, the Pareto front consists of a single policy. Figure \ref{fig:esr_pareto} (right) displays the expected values of the policies in the ESR set, highlighted in green. By comparing both plots in Figure \ref{fig:esr_pareto}, it is clear that  the ESR set contains an extra policy. Therefore, in some settings, certain policies that are optimal under the ESR criterion are dominated under the SER criterion. Figure \ref{fig:esr_pareto} highlights the importance of selecting the correct optimality criterion when learning. If SER methods are used to compute a set of optimal policies in scenarios where the ESR criterion should be used, it is possible a sub-optimal policy may be selected by the user at decision time. This may have adverse affects when applying multi-policy multi-objective methods in real-world decision making settings.

\subsection{Vaccine Recommender System}
To illustrate a potential real-world use case for the ESR criterion and ESR dominance, we define a new multi-objective multi-armed bandit environment known as the Vaccine Recommender System. For example, in a medical setting a doctor may only have one opportunity to select a treatment for a patient. In this case it is necessary to optimise under the ESR criterion. Consider the following scenario: a patient is travelling to another country where it is required to be vaccinated for a specific disease to gain entry to the country. There are five available vaccines, however, each vaccine will have varying side effects (safety rating) and effectiveness. This problem has two objectives: safety and effectiveness. Both objectives are ranked from 0 to 5, with 0 being the worst rating and 5 being the best rating. None of the available vaccines are 100$\%$ effective at treating the disease. When taking each vaccine there is a chance of different outcomes occurring, for example, there is a chance of having severe side effects (low safety rating) and a chance of the vaccine providing the required immunity to the disease (high effectiveness rating). Table \ref{table:vaccine_distributions} outlines each vaccine and the probability of each outcome occurring after taking the vaccine. Table \ref{table:vaccine_distributions} is unknown to the agent, and the agent aims to learn each distribution per vaccine and prune the ESR dominated vaccines from consideration.
\begin{table}

    \centering
    \begin{tabular}{| c | c |}
    \multicolumn{2}{c}{Vaccine 1 ($V_{1}$)} \\
    \hline
         P($V_{1}$= $\textbf{R}$) & \textbf{R} \\
    \hline
         0.05 & (2, 0) \\
    \hline 
          0.05 & (2, 1) \\
    \hline 
          0.1 & (3, 2) \\
     \hline 
          0.8 & (4, 2) \\
    \hline
    \end{tabular}
    \begin{tabular}{| c | c |}
    \multicolumn{2}{c}{Vaccine 2 ($V_{2}$)} \\
    \hline
         P($V_{2}$= $\textbf{R}$) & \textbf{R} \\
    \hline
         0.1 & (0, 0) \\
    \hline 
          0.1 & (1, 1) \\
    \hline 
          0.5 & (2, 0) \\
     \hline 
          0.3 & (2, 1) \\
    \hline
    \end{tabular}
    \begin{tabular}{| c | c |}
    \multicolumn{2}{c}{Vaccine 3 ($V_{3}$)} \\
    \hline
         P($V_{3}$= $\textbf{R}$) & \textbf{R} \\
    \hline
         0.1 & (1, 0) \\
    \hline 
          0.1 & (1, 3) \\
    \hline 
          0.2 & (3, 4) \\
     \hline 
          0.6 & (5, 4) \\
    \hline
    \end{tabular}
    \begin{tabular}{| c | c |}
    \multicolumn{2}{c}{Vaccine 4 ($V_{4}$)} \\
    \hline
         P($V_{4}$= $\textbf{R}$) & \textbf{R} \\
    \hline
         0.1 & (1, 0) \\
    \hline 
          0.4 & (2, 1) \\
    \hline 
          0.4 & (3, 1) \\
     \hline 
          0.1 & (3, 2) \\
    \hline
    \end{tabular}
    \begin{tabular}{| c | c |}
    \multicolumn{2}{c}{Vaccine 5 ($V_{5}$)} \\
    \hline
         P($V_{5}$= $\textbf{R}$) & \textbf{R} \\
    \hline 
          0.8 & (0, 0) \\
    \hline
         0.05 & (1, 1) \\
    \hline 
          0.05 & (1, 2) \\
    \hline 
          0.1 & (4, 0) \\
    \hline
    \end{tabular}
    \caption{A group of available vaccines that have varying outcomes. Some vaccines have a higher chance of side effects (low safety rating), while others are more effective at providing immunity. The objectives are ordered as follows: $\textbf{R}$ = (safety, effectiveness).}
    \label{table:vaccine_distributions}
\end{table}

Given the utility function of the user is unknown, the MOTDRL algorithm is used to learn the underlying return distributions for each vaccine in Table \ref{table:vaccine_distributions} and determine the ESR set. Once MOTDRL has finished learning a set of optimal polices, in this case the ESR set, is returned to the user. When the user's utility function becomes known, a vaccine that maximises the user's utility function can be selected from the ESR set by the user.

The ESR set for the Vaccine Recommender System (VRS) environment is known a priori. The return distributions for $V_{1}$ and $V_{3}$ are ESR dominant and therefore $V_{1}$ and $V_{3}$ are the only distributions in the ESR set. The VRS environment has five arms where each arm corresponds to a vaccine in Table \ref{table:vaccine_distributions}. To evaluate MOTDRL in a VRS environment, we set $\textbf{R}_{min} = 0$, $\textbf{R}_{max} = 10$, $D$ = 2, $\beta$ = 5 and $|E^{*}|$ = 2. All experiments in this setting are averaged over 10 runs and each experiment lasts $200,000$ episodes. To compute the coverage ratio, we set $\epsilon = 0.01$. 
\begin{figure}[h]
    \centering
    \includegraphics[height = 7cm, width=9cm]{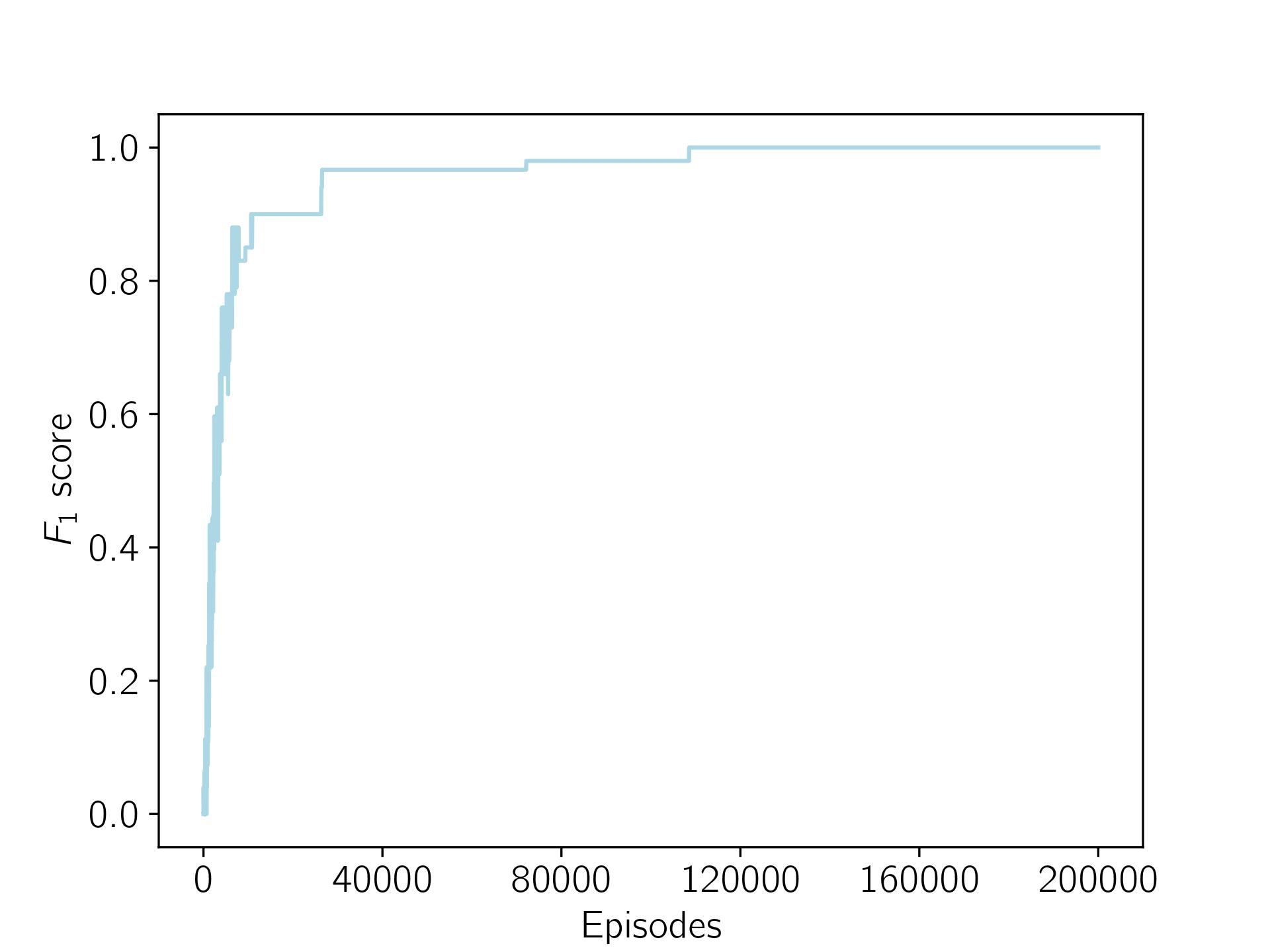}
    \caption{Results from the VRS environment. MOTDRL is able to learn the full ESR set as it converges the optimal $F_1$ score of 1.} 
    \label{fig:vaccine_f1}
\end{figure}

After sufficient sampling, MOTDRL is able to learn the underlying return distributions for each arm  in the VRS environment. Given return distributions can be used to give a partial ordering over policies, MOTDRL can use the return distributions for each arm to compute the ESR set in the VRS environment. In Figure \ref{fig:vaccine_f1}, we plot the coverage ratio as the $F_1$ score. MOTDRL converges to the optimal $F_1$ score after $120,000$ episodes. Given MOTDRL converges to the optimal $F_1$ score it is clear MOTDRL is able to learn the ESR set. 

\begin{figure}[h]
    \centering
    \begin{subfigure}{.5\textwidth}
         \includegraphics[width=6.25cm]{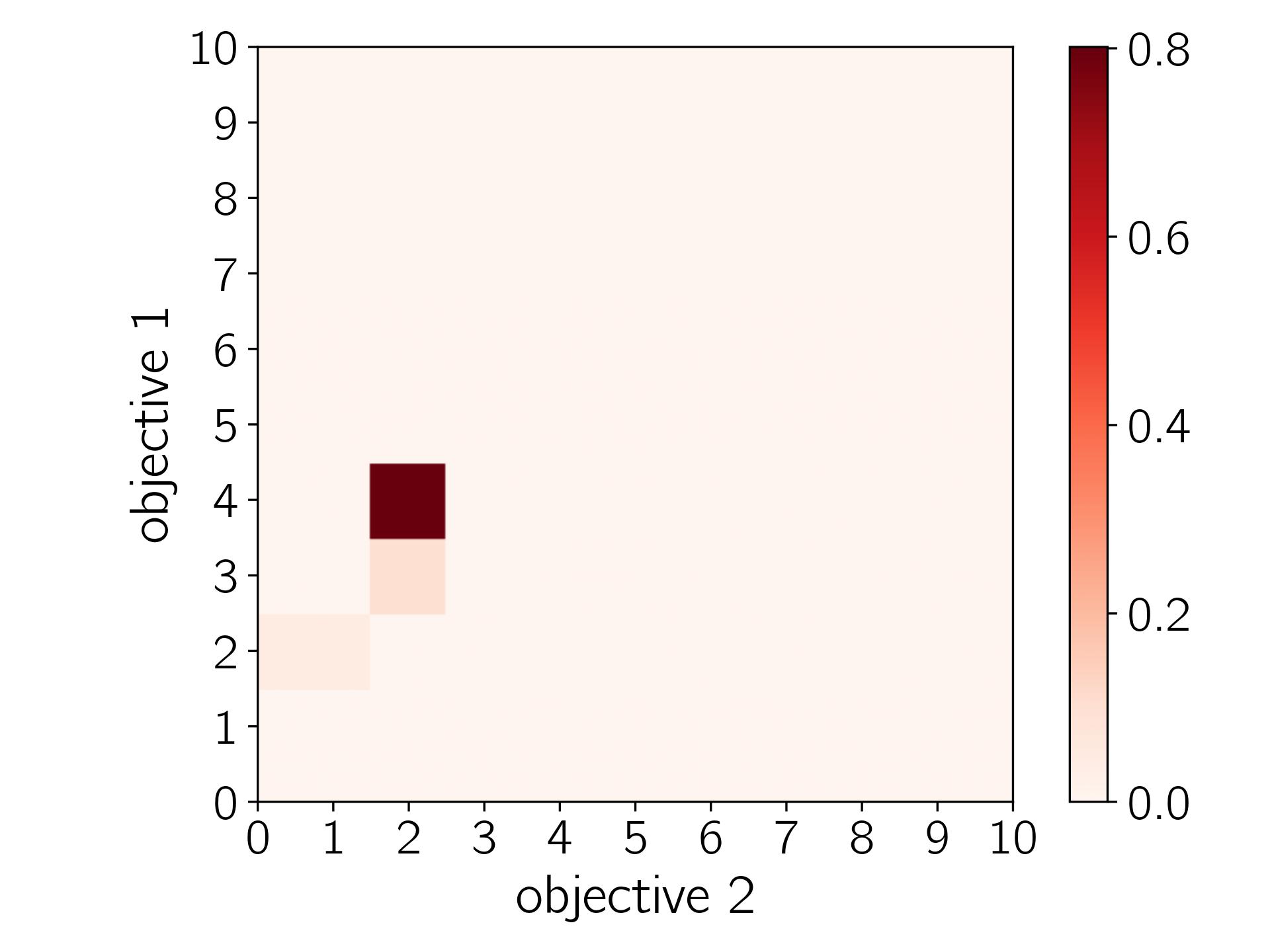}
    \end{subfigure}%
    \begin{subfigure}{.5\textwidth}
    \includegraphics[width=6.25cm]{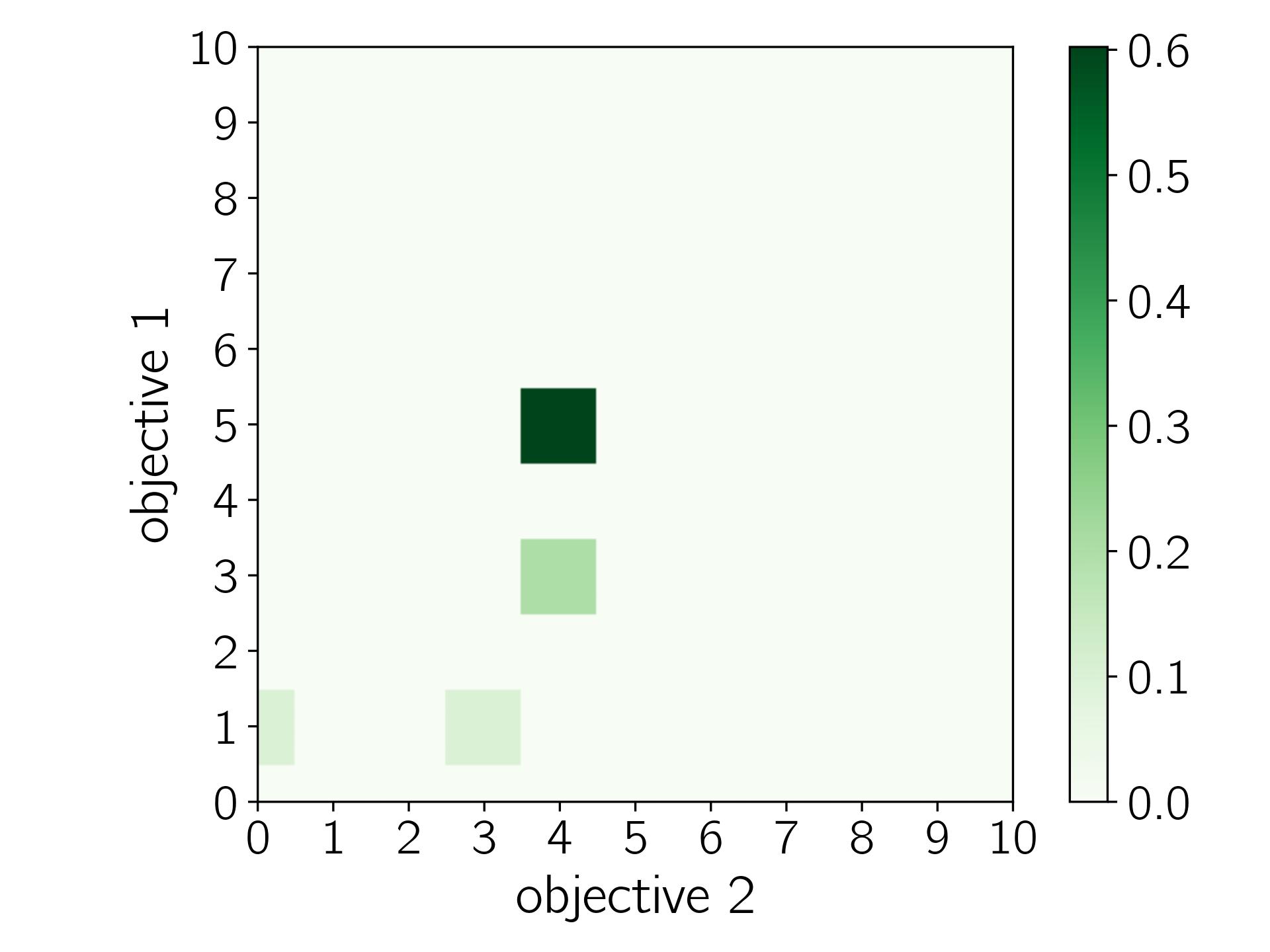}
    \end{subfigure}
    \caption{Heatmaps for each policy in the ESR set learned by MOTDRL. The left heatmap describes the distribution for $V_{1}$ learned by MOTDRL and the right heatmap describes the distribution for $V_{3}$ learned by MOTDRL.} 
    \label{fig:vaccine_heatmaps}
\end{figure}

In practice, once learning has completed, MOTDRL returns the learned ESR set for the VRS environment to the user. The learned ESR set contains two vaccines; $V_{1}$ and $V_{3}$. Both vaccines in the ESR set are ESR dominant. Moreover, a user with a monotonically increasing utility function will prefer either $V_{1}$ or $V_{3}$ over all other vaccines in the VRS environment.

Similarly to Section \ref{sec:experiments_momab}, we utilise Figure \ref{fig:vaccine_heatmaps} and Figure \ref{fig:vaccine_cdfs} to give the reader some intuition about ESR dominance. Figure \ref{fig:vaccine_heatmaps} presents heatmaps to represent the policies in the ESR set learned by MOTDRL. Each heatmap represents a return distribution learned by MOTDRL and shows the return vectors and the corresponding probabilities. Each heatmap in Figure \ref{fig:vaccine_heatmaps} corresponds to the probabilities highlighted for $V_{1}$ (left) and $V_{3}$ (right) in Table \ref{table:vaccine_distributions}. Figure \ref{fig:vaccine_cdfs} displays the policies in the ESR set learned by MOTDRL and their corresponding CDFs. Each CDF in Figure \ref{fig:vaccine_cdfs} corresponds to the CDFs of the underlying return distributions of $V_{1}$ and $V_{3}$ in Table \ref{table:vaccine_distributions}. 

\begin{figure}
    \centering
    \begin{subfigure}{.5\textwidth}
         \includegraphics[width=6cm]{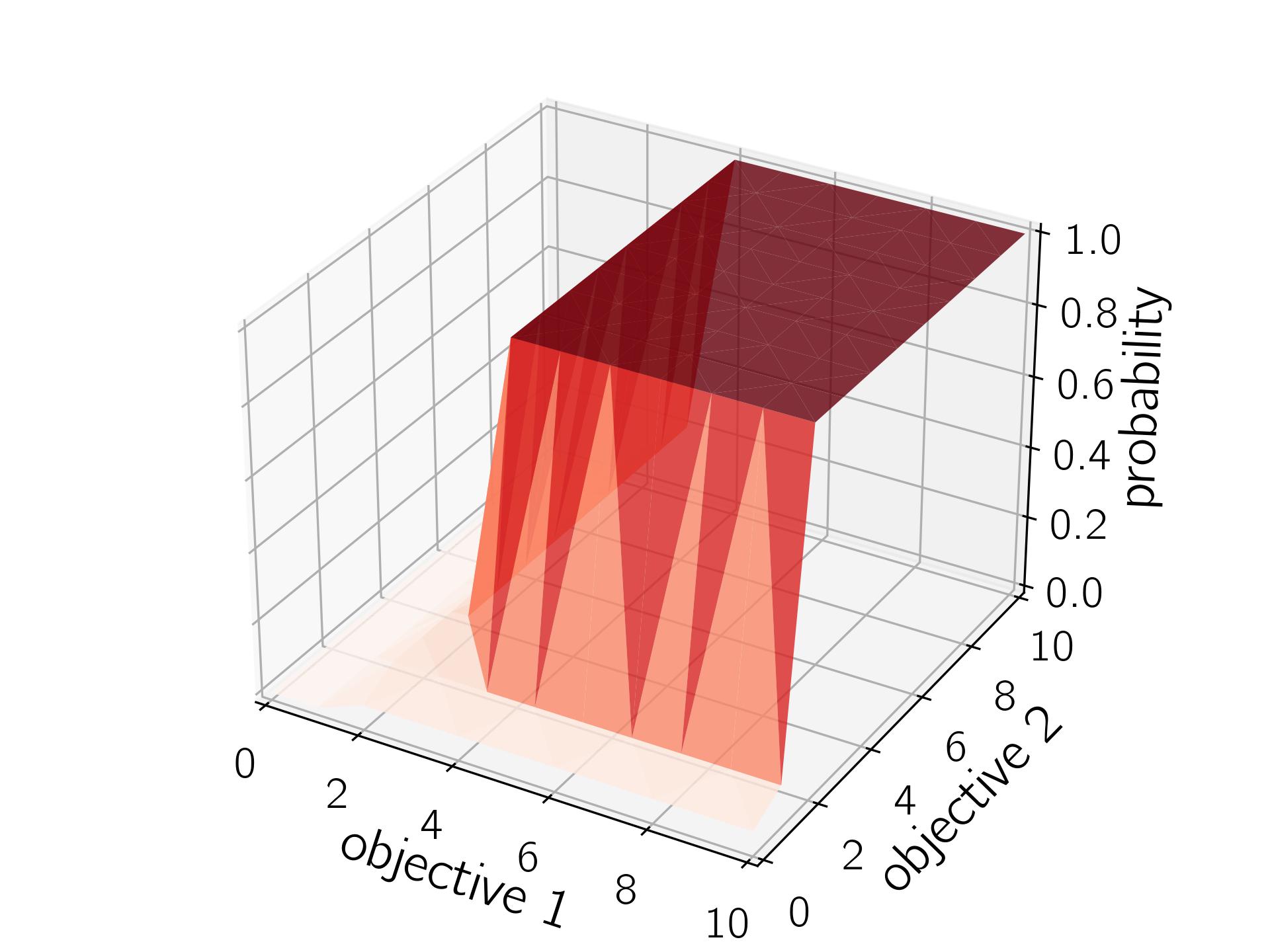}
    \end{subfigure}%
    \begin{subfigure}{.5\textwidth}
    \includegraphics[width=6cm]{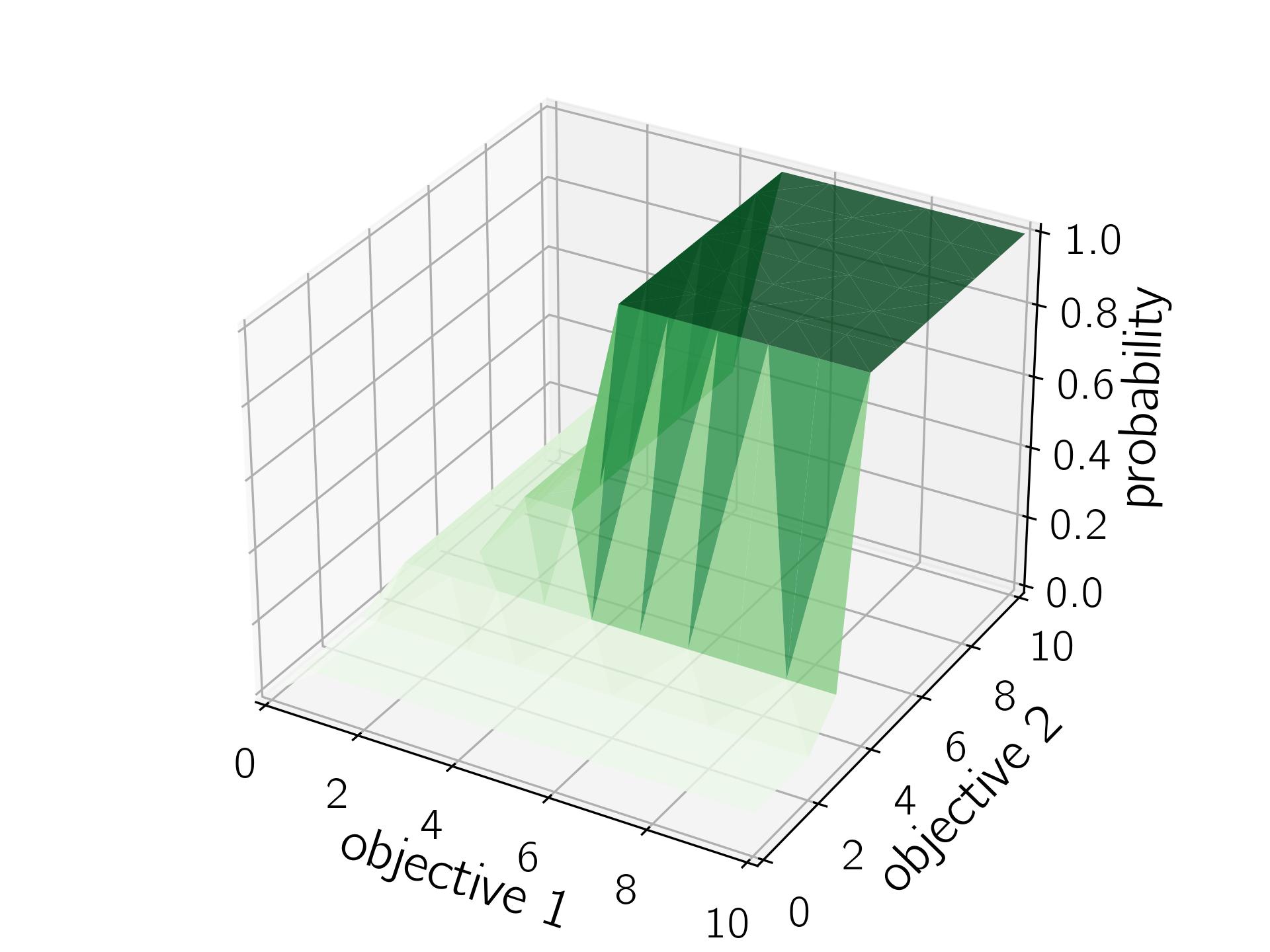}
    \end{subfigure}
    \caption{CDFs for each policy in the ESR set learned by MOTDRL in the VRS environment. The left figure describes the CDF for $V_{1}$ learned by MOTDRL and the right figure describes the CDF for $V_{3}$ learned by MOTDRL.} 
    \label{fig:vaccine_cdfs}
\end{figure}


\section{Related Work}
\label{sec:related}
The various orders of stochastic dominance have been used extensively as a method to determine the optimal decision when making decisions under uncertainty in economics \cite{choi1988economics}, finance \cite{ali1975finance,bawa1978finance}, game theory \cite{fishburn1978ngametheory}, and various other real-world scenarios \cite{bawa1982realworld}. However, stochastic dominance has largely been overlooked in systems that learn. Cook and Jarret \cite{cook2018mo_sd} use various orders of stochastic dominance and Pareto dominance with genetic algorithms to compute optimal solution sets for an aerospace design problem with multiple objectives when constrained by a computational budget. Martin et al. \cite{martin2020stochastically} use second-order stochastic dominance (SSD) with a single-objective distributional RL algorithm \cite{bellemare2017distributional}. Martin et al. \cite{martin2020stochastically} use SSD to determine the optimal action to take at decision time, and this approach is shown to learn good policies during experimentation.

To learn the ESR set in sequential decision making processes, like MOMDPs, new distributional MORL methods must be formulated. Distributional Monte Carlo tree search (DMCTS) is a state-of-the-art ESR method and uses a bootstrap Thompson sampling method to approximate a posterior distribution over the returns \cite{hayes2021dmcts}. However, this method is a single policy method and relies on the utility function of the user to be known at the time of learning or planning. DMCTS would invalidate the ESR criterion in the unknown utility function scenario and would therefore be unable to learn the ESR set. Distributional methods like the C51 algorithm, proposed by Bellemare et al. \cite{bellemare2017distributional}, could potentially be used to learn the underlying distribution of a random vector. However C51 is a single-objective method and defining a multi-objective version of C51 to learn the ESR set could pose significant challenges. Replacing the distribution over returns used by C51 with a multi-variate distribution could cause computation to increase with the number of objectives. In this case, dedicated multi-objective distributional methods must be formulated so that it is possible to efficiently learn the ESR set for the ESR criterion. We highlight this as a new challenge that must be addressed by the MORL community. 

\section{Conclusion \& Future Work} 
\label{sec:conclusion}
MORL has been highlighted as one of several key challenges that needs to be addressed in order for RL to be commonly deployed in real-world systems \cite{dulac-arnold2021}. In order to apply RL to the real world the MORL community must consider the ESR criterion. However, the ESR criterion has largely been ignored by the MORL community, with the exception of the works of Roijers et al. \cite{roijers2013survey,roijers2018multi}, Hayes et al. \cite{hayes2021dmcts,hayes2021dmcts_long} and Vamplew et al. \cite{vamplew2021}. The works of Hayes et al. \cite{hayes2021dmcts_long,hayes2021dmcts} and Roijers et al. \cite{roijers2018multi} present single-policy algorithms that are suitable to learn policies under the ESR criterion, however, prior to this work, a formal definition of the necessary requirements to compute policies under the ESR criterion had not previously been defined. In Section \ref{sec:esr}, we outline, through examples and definitions, the necessary requirements to optimise under the ESR criterion.
The formal definitions outlined in Section \ref{sec:esr} ensure that an optimal policy can be learned when the utility function of the user is known under the ESR criterion. However, in the real world, a user's preferences over objectives (or utility function) may be unknown at the time of learning \cite{roijers2013survey}.

Prior to this paper, a suitable solution set for the unknown utility function scenario under the ESR criterion had not been defined. This long-standing research gap has restricted the applicability of MORL in real-world scenarios under the ESR criterion. In Section \ref{sec:stochastic_dominance_esr} and Section \ref{sec:esr_solution_sets} we define the necessary solution sets required for multi-policy algorithms to learn a set of optimal policies under the ESR criterion when the utility function of a user is unknown. In Section \ref{sec:modrl} we present a novel multi-policy algorithm, known as multi-objective tabular distributional reinforcement learning (MOTDRL), that can learn the ESR set in a MOMAB setting when the utility function of a user is unknown at the time of learning. In Section \ref{sec:experiments} we evaluate MOTDRL in two MOMAB settings and show that MOTDRL can learn the ESR set in MOMAB settings.
This work aims to answer some of the existing research questions regarding the ESR criterion. Moreover, we aim to highlight the importance of the ESR criterion when applying MORL to real-world scenarios. In order to successfully apply MORL to the real world, we must implement new single-policy and multi-policy algorithms that can learn solutions for non-linear utility functions in various scenarios.

A promising  direction for future work would be to extend the work of Hayes et al. \cite{hayes2021dmcts} and the work of Wang and Sebag \cite{wang2012multi}. It may be possible to build on the aforementioned works to implement a multi-objective distributional Monte Carlo tree search algorithm that can learn a set of optimal policies under the ESR criterion. It is important to note that Hayes et al. \cite{hayes2021dmcts_long,hayes2021dmcts} use bootstrap Thompson sampling to approximate a posterior distribution. This method cannot learn the ESR set when utility function of a user is unknown, therefore a different distributional method must be used to learn the ESR set. Although the distributional method used by Hayes et al. \cite{hayes2021dmcts} cannot be used to learn the ESR set, this work is still a useful starting point.

Given distributional MORL methods are a new area of research, not much is known about the computational requirements of maintaining a return distribution. Therefore, it is important that a comprehensive computational analysis of distributional MORL methods is undertaken to fully understand the implications of maintaining a return distribution. In a future publication we plan to perform a computational analysis for distributional MORL methods in both bandit and sequential decision making settings.

A lack of well defined benchmarks is a significant challenge associated with implementing any new single-policy or multi-policy algorithms under the ESR criterion. Currently, very few ESR benchmark environments exist (e.g. Fishwood \cite{roijers2018multi}). In order to accurately evaluate single-policy and multi-policy ESR algorithms, a suite of benchmark problem domains need to be designed. Under the SER criterion, such benchmarks already exist, e.g. Deep Sea Treasure \cite{vamplew2011evaluation_methods}.
It is also important to highlight the need to establish new metrics to evaluate multi-policy algorithms under the ESR criterion. As previously mentioned, all metrics used to evaluate multi-objective algorithms are designed for the SER criterion. In order to accurately evaluate multi-policy algorithms under the ESR criterion, new metrics must be determined. We note that extending the work of Zintgraf et al. \cite{zintgraf2015quality_ser} for the ESR criterion would be a promising starting point. 

\section{Supplementary Material}
\label{sec:supplemental_material}

\begin{lemma}
(Beppo Levi's lemma \cite{Schappacher1995BeppoLA}) Consider a point-wise non-decreasing sequence of positive functions $f_n\ :\ X \rightarrow [0, +\infty]$, i.e., for every $k \ge 1$ and every $x \in X$.
\begin{equation*}
0 \le f_n(x) \le f_{n+1}(x) \le +\infty
\end{equation*}
Set the point-wise limit of the sequence $\{f_i\}$ to be $f$. That is, for every $x \in X$,
\begin{equation*}
\lim_{n \rightarrow +\infty} f_n(x) = f(x)
\end{equation*}
Then $f$ is measurable and:
\begin{equation*}
\lim_{n \rightarrow +\infty} \int f_n(x) dx = \int \lim_{n \rightarrow +\infty}  f_n(x) dx 
\end{equation*}
\label{lemma:beppo_levi}
\end{lemma}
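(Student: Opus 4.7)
The plan is to recognise this as the classical Monotone Convergence Theorem from measure theory and follow the standard proof based on simple-function approximation. The measurability of the pointwise limit $f$ is immediate from the fact that a pointwise limit of a sequence of measurable functions is measurable, so the bulk of the argument concerns the exchange of limit and integral.

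First I would establish the easy inequality $\lim_{n \to \infty} \int f_n\, dx \le \int f\, dx$. Since $f_n(x) \le f(x)$ for every $x$ and every $n$, monotonicity of the integral yields $\int f_n\, dx \le \int f\, dx$ for each $n$. Because $\{\int f_n\, dx\}$ is monotonically increasing (as $f_n \le f_{n+1}$), its limit equals its supremum, and that supremum is bounded above by $\int f\, dx$.

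The main obstacle is the reverse inequality $\int f\, dx \le \lim_{n \to \infty} \int f_n\, dx$. My plan is to use the standard trick with simple functions. Fix an arbitrary measurable simple function $s$ with $0 \le s \le f$, and fix $0 < \alpha < 1$. Define $E_n = \{x \in X : f_n(x) \ge \alpha s(x)\}$. The sequence $\{E_n\}$ is monotonically increasing because $f_n$ is non-decreasing in $n$, and $\bigcup_n E_n = X$: at any point where $s(x) = 0$ the condition holds trivially, and at any point where $s(x) > 0$ we have $\alpha s(x) < s(x) \le f(x) = \lim_n f_n(x)$, so eventually $f_n(x) \ge \alpha s(x)$. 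From $\int f_n\, dx \ge \alpha \int_{E_n} s\, dx$ and continuity of measure from below, $\lim_n \int_{E_n} s\, dx = \int_X s\, dx$, hence $\lim_n \int f_n\, dx \ge \alpha \int s\, dx$. Letting $\alpha \to 1^-$ and then taking the supremum over all simple $s$ with $0 \le s \le f$ yields $\lim_n \int f_n\, dx \ge \int f\, dx$ by the very definition of the Lebesgue integral of $f$.

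The subtle step to watch carefully is the verification that $\bigcup_n E_n = X$, where both the monotonicity of $\{f_n\}$ and the pointwise convergence $f_n(x) \to f(x)$ are essential; the case $s(x) = f(x) > 0$ is precisely why one introduces the slack factor $\alpha < 1$. Beyond this, the remaining work is purely bookkeeping with monotone limits and the approximation of non-negative measurable functions by simple functions from below.
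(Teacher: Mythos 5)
The paper does not actually prove this lemma: it states Beppo Levi's lemma as a known classical result, cites the literature for it, and uses it only as an ingredient in the proof of the subsequent monotone-convergence lemma. Your proposal supplies the standard textbook proof of the monotone convergence theorem, and it is correct: the easy inequality $\lim_n \int f_n \le \int f$ follows from $f_n \le f$ and monotonicity of the integral, and the reverse inequality is obtained by the usual device of fixing a simple function $0 \le s \le f$, a slack factor $0 < \alpha < 1$, and the increasing sets $E_n = \{x : f_n(x) \ge \alpha s(x)\}$ whose union is $X$, then invoking continuity from below of the measure $E \mapsto \int_E s$. You correctly identify why the factor $\alpha$ is needed (the case $s(x) = f(x) > 0$) and why both monotonicity and pointwise convergence are used in showing $\bigcup_n E_n = X$. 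Two minor points worth flagging if you were to write this out in full: the lemma as stated only calls the $f_n$ ``positive functions,'' so you must (as you implicitly do) assume they are measurable for the argument to go through, and the integral must be read as a Lebesgue integral with respect to a measure on $X$; neither caveat affects the validity of your argument.
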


\begin{lemma} (Monotone convergence)
Let $u$ be a non-negative monotonically increasing utility function in $x$ and $y$, and $F$ the CDF of a random variables $X$ and $Y$. Then,
\begin{equation*}
\int \lim_{y \rightarrow +\infty} u(x, y) F(x, y) dx = \lim_{y \rightarrow +\infty} \int u(x, y) F(x, y) dx.
\end{equation*}
\label{lemma:monotone_convergence}
\end{lemma}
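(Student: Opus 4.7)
The plan is to apply Beppo Levi's lemma (Lemma~\ref{lemma:beppo_levi}) to the sequence $f_n(x) = u(x, n)\, F(x, n)$, with the continuous limit $y \to +\infty$ recovered from sequential limits via monotonicity. To set this up, I would first verify the three hypotheses of Beppo Levi pointwise in $x$: (a) non-negativity of $f_n$, which follows from $u \ge 0$ by assumption and $F(x, y) = P(X \le x, Y \le y) \in [0, 1]$; (b) monotonicity in $n$, which I would obtain by noting that $u(x, \cdot)$ is monotonically increasing by hypothesis and $F(x, \cdot)$ is monotonically increasing because enlarging the threshold $y$ can only enlarge the event $\{X \le x, Y \le y\}$; and (c) the product of two non-negative non-decreasing sequences is itself non-decreasing, since $a \ge c \ge 0$ and $b \ge d \ge 0$ imply $ab \ge cb \ge cd$.

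With the hypotheses in place, Beppo Levi gives
\begin{equation*}
\lim_{n \to +\infty} \int u(x, n)\, F(x, n)\, dx = \int \lim_{n \to +\infty} u(x, n)\, F(x, n)\, dx.
\end{equation*}
It then remains to replace the sequential limit $n \to +\infty$ with the continuous limit $y \to +\infty$ on both sides. For each fixed $x$, the function $y \mapsto u(x, y)\, F(x, y)$ is non-decreasing and non-negative, so its limit as $y \to +\infty$ equals its supremum over $y$, which in turn equals $\lim_{n \to +\infty} u(x, n)\, F(x, n)$ for the particular increasing sequence $y_n = n$. The same monotonicity argument applies after integrating in $x$, since $\int u(x, y)\, F(x, y)\, dx$ is a non-decreasing function of $y$.

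The main obstacle I anticipate is the bookkeeping around the continuous-versus-sequential limit; Beppo Levi is stated for a countable sequence, whereas the lemma is phrased with a continuous parameter. The monotonicity in $y$ is exactly what allows this subtlety to be dispatched cleanly, so flagging and justifying that reduction is the one step that deserves explicit care. A secondary but easy point is verifying that the product structure interacts correctly with monotonicity; this is a one-line inequality but should be stated, since the lemma is invoked later in Theorem~\ref{theorem:multivariate_FSD} precisely on such a product. No additional integrability assumption is needed because Beppo Levi permits the common value of the two sides to be $+\infty$.
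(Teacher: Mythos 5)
Your proposal follows essentially the same route as the paper: both define the sequence $g_n(x) = u(x,n)\,F(x,n)$ and invoke Beppo Levi's lemma (Lemma~\ref{lemma:beppo_levi}) using positivity and monotonicity of the product. Your additional care in verifying the product-monotonicity inequality and in reducing the continuous limit $y \to +\infty$ to the sequential limit $n \to +\infty$ fills in details the paper leaves implicit, but it is the same argument.
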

\begin{proof}
Let $g_n(x) = u(x, n) F(x, n)$. As $u$ and $F$ are positive monotonically increasing functions in $n$, the function $g_n$ is also positive and monotonically increasing, i.e.,
\begin{equation*}
0 \le g_n(x) \le g_{n+1}(x) \le +\infty
\end{equation*}
According to Beppo Levi's lemma (see Lemma~\ref{lemma:beppo_levi}), the limit of the integral of $g_n(x)$ in $x$ is the integral of its limit, i.e.,
\begin{equation*}
\lim_{n \rightarrow +\infty} \int g_n(x) dx = \int \lim_{n \rightarrow +\infty} g_n(x) dx.
\end{equation*}
\end{proof}


\section*{Acknowledgements}
Conor F.\ Hayes is funded by the National University of Ireland Galway Hardiman Scholarship. This research was supported by funding from the Flemish Government under the ``Onderzoeksprogramma Artificiële Intelligentie (AI) Vlaanderen'' program. 

\section*{Conflict of interest}
The authors declare that they have no conflict of interest.

\bibliographystyle{unsrt}  
\bibliography{references}

\end{document}